\begin{document}
\title{Verifying Controllers with Convolutional Neural Network-based Perception: A Case for  Intelligible, Safe, and Precise Abstractions}

% LNCS style short title and authors
%\titlerunning{Discover ODDs for DNN based on System Specifications}
%
%\author{Chiao Hsieh \and Keyur Joshi \and Sasa Misailovic \and Sayan Mitra}
%
%\institute{%
%University of Illinois at Urbana-Champaign, Urbana-Champaign IL, USA
%\email{\{chsieh16,kpjoshi2,misailo,mitras\}@illinois.edu}
%}

% \newtheorem{proposition}{proposition}
% ACM style Authors
\author{Chiao Hsieh}
\affiliation{%
    \institution{University of Illinois at Urbana-Champaign}
    \streetaddress{Coordinated Science Laboratory}
    \streetaddress{1308 W Main St}
    \city{Urbana}
    \state{IL}
    \country{USA}
    \postcode{61801}
}
\email{chsieh16@illinois.edu}

\author{Keyur Joshi}
\affiliation{%
    \institution{University of Illinois at Urbana-Champaign}
    \streetaddress{Coordinated Science Laboratory}
    \streetaddress{1308 W Main St}
    \city{Urbana}
    \state{IL}
    \country{USA}
    \postcode{61801}
}
\email{kpjoshi2@illinois.edu}

\author{Sasa Misailovic}
\affiliation{%
    \institution{University of Illinois at Urbana-Champaign}
    \streetaddress{Coordinated Science Laboratory}
    \streetaddress{1308 W Main St}
    \city{Urbana}
    \state{IL}
    \country{USA}
    \postcode{61801}
}
\email{misailo@illinois.edu}

\author{Sayan Mitra}
\affiliation{%
  \institution{University of Illinois at Urbana-Champaign}
  \streetaddress{Coordinated Science Laboratory}
  \streetaddress{1308 W Main St}
  \city{Urbana}
  \state{IL}
  \country{USA}
  \postcode{61801}
}
\email{mitras@illinois.edu}

\newcommand{\sayan}[1]{} % {\textcolor{blue}{#1}}
\newcommand{\chiao}[1]{} % {\textcolor{olive}{#1}}
\newcommand{\keyur}[1]{\textcolor{purple}{#1}}

\setlength{\abovecaptionskip}{2pt}
\setlength{\belowcaptionskip}{0pt}

\begin{abstract}
Convolutional Neural Networks (CNN) for object detection, lane detection, and segmentation now sit at the head of most autonomy pipelines, and yet, their safety analysis remains an important challenge. Formal analysis of perception models is fundamentally difficult because their correctness is hard if not impossible to specify.  We present a technique for inferring intelligible and safe abstractions for perception models from system-level safety requirements, data, and program analysis of the modules that are downstream from perception. The technique can help tradeoff  safety, size, and precision, in creating abstractions and the subsequent verification. We apply the method to two significant case studies based on high-fidelity simulations (a)~a vision-based lane keeping controller for an autonomous vehicle and (b)~a controller for an agricultural robot. We show how the generated abstractions can be composed with the downstream modules  and then the resulting abstract system can be verified using program analysis tools like CBMC.  Detailed evaluations of the impacts of size, safety requirements, and the environmental parameters (e.g., lighting, road surface, plant type)  on the precision of the generated abstractions suggest that the approach can help guide the search for corner cases and safe operating envelops.
\end{abstract}

\maketitle

% Domains
\newcommand{\States}{\ensuremath{\mathcal{X}}\xspace}
\newcommand{\Envparams}{\ensuremath{E}\xspace}
\newcommand{\Unsafe}{\ensuremath{U}\xspace}
\newcommand{\Inv}{\ensuremath{\mathcal{I}}\xspace}

\newcommand{\Seterr}{\ensuremath{Z}\xspace}
\newcommand{\Setctrlout}{\ensuremath{\mathcal{U}}\xspace}
\newcommand{\Setimage}{\ensuremath{P}\xspace}

% Edges elements in the system
\renewcommand{\state}{\ensuremath{\mathbf{x}}\xspace}
\newcommand{\err}{\ensuremath{\mathbf{z}}\xspace}
\newcommand{\ctrlout}{\ensuremath{\mathbf{u}}\xspace}
\newcommand{\envparams}{\ensuremath{\mathbf{e}}\xspace}
\newcommand{\image}{\ensuremath{\mathbf{p}}\xspace}

% Vertices in the system
\newcommand{\camera}{\ensuremath{\mathit{sensor}}\xspace}
\newcommand{\nnet}{\ensuremath{\mathit{DNN}}\xspace}
\newcommand{\controller}{\ensuremath{\mathit{control}}\xspace}
\newcommand{\dynamics}{\ensuremath{\mathit{dynamics}}\xspace}

% Function symbol of the vertices
\newcommand{\Fcam}{\ensuremath{\mathit{s}}\xspace}
\newcommand{\Fnn}{\ensuremath{\mathit{h}}\xspace}
\newcommand{\Fctrl}{\ensuremath{\mathit{g}}\xspace}
\newcommand{\Fdyn}{\ensuremath{\mathit{f}}\xspace}
\newcommand{\Fmm}{\ensuremath{\mathit{m}^*}\xspace}
\newcommand{\Fapprox}{\ensuremath{M}\xspace}
\newcommand{\Ftol}{\ensuremath{\mathcal{R}}\xspace}

% Variable symbols in examples
\newcommand{\vf}{\ensuremath{v_f}\xspace}
\newcommand{\dT}{\ensuremath{\Delta T}\xspace}

% Inferred Parameters
\newcommand{\Ai}{\ensuremath{\mathbf{A}_i}\xspace}
\newcommand{\bi}{\ensuremath{\mathbf{b}_i}\xspace}
\newcommand{\ri}{\ensuremath{r_i}\xspace}

\newcommand{\Sys}{\ensuremath{\mathit{Sys}}\xspace}
\newcommand{\IdealSys}{\ensuremath{\mathit{Sys}^*}\xspace}
\newcommand{\ApproxSys}{\ensuremath{\widehat{\mathit{Sys}}}\xspace}
\newcommand{\lyapunov}{\ensuremath{V}\xspace}

\section{Introduction}
\label{sec:intro}

Assuring safety of autonomous systems is a looming technical challenge.
Regulatory agencies from many industries -- aerospace~\cite{AI-EASA21}, automotive~\cite{koopman2019safety}, trucking, agriculture, manufacturing -- are busily drawing-up definitions, concepts of operations, processes, and guidelines for enforcing safety.
Deep learning, a key enabler for autonomy, is becoming indispensable for perception, and increasingly central for decision and control.
Not coincidentally, research on formally verifying isolated neural networks (NN), has received a degree of attention~\cite{reluplex,NeVer,NNVTool,sherlock,verisig,bak2021second}.
\emph{System-level safety assurance} brings a somewhat different set of opportunities and challenges and has been addressed in a small number of recent efforts~\cite{verifAI,katz2021verification,dean2020a}.

Even though writing formal requirements for CNN perception models may be ill-posed~\cite{seshia2018formal}, safety requirements for autonomous systems using those very models, is usually fairly obvious.
A ``lane'' may be difficult to specify in terms of pixel intensity thresholds, but the safety requirements of a lane keeping control system are less mysterious.
It is well-known that CNNs have fragile decision boundaries and are susceptible to adversarial inputs~\cite{szegedy2014intriguing}.
Since the  existing NN verification tools verify properties around a  small neighborhood of the input space of the NN, presence of adversarial inputs make the NN verification results conservative. System-level safety analysis, on the other hand, has to deal  with the temporal evolution of the whole system (including the NNs), and therefore, could benefit from the smoothness of the physical world, and be more robust and precise by overlooking the occasional mis-classifications.  Such robustness has indeed been empirically observed~\cite{lu2017stopsign}.

% There is an opportunity for transferring principled, and possibly formal, safety analysis methodologies to high-impact application domains, provided they can be made practical.
%Within the broader effort to
In this paper, we propose a safety assurance technique of control systems that use CNN models for perception.
Since completely formal specification and verification of such CNN models is hard, if not impossible, our method creates \emph{abstractions} of the CNN-based perception subsystem.
An abstraction \Fapprox (or over-approximation)
of a CNN perception subsystem $P$ is obviously useful for safety analysis---if the abstract system obtained by replacing $P$ with \Fapprox can be verified safe, then we can infer safety of the original  concrete system.
However, there are two barriers to this approach.

First, the problem of verification of the CNN $P$ is now shifted to the problem of proving that \Fapprox is an abstraction of $P$. For the same reasons mentioned above, we will not attempt to solve this problem formally.  Instead, we aim to provide statistical evidence about the {\em precision} of the abstraction.

Second, the abstraction $M$ should not only prove safety of the system, but also it should be \emph{intelligible}. Safety assurances should not only come from tests, proofs, processes, and verification artifacts showing \emph{that the system is correct}, but also from explications on \emph{why it is so}~\cite{adadi2018xai, baier2021XAI}.
We agree with this sentiment and will aim to create abstractions that are intelligible by human designers, testers, and auditors.

These three axes---safety, intelligibility, and precision---define a space for exploring different safety assurance methodologies for  autonomous systems.
In this paper, we present a particular method that constructs \emph{property-guided, piece-wise affine abstractions}.
To our knowledge, this is the first abstraction-based approach to verify control systems that use CNNs for perception.
We use a piece-wise affine template for \Fapprox:
Suppose the ground truth perception input to the control system is $\Fmm(\state, \envparams)$,
for a given state $\state$ and a set of environmental parameters $\envparams$.
These parameters could include lighting conditions, road surface, weather conditions, etc.
Then $\Fapprox(\state,\envparams)$ will be a \emph{set-valued function},
where the center (mean or bias) of the each set is a piece-wise affine function $\Ai(\Fmm(\state,\envparams)) + \bi$ of the ground truth $\Fmm(\state,\envparams)$.
We need not know this ground truth function \Fmm or its precise dependence on the environmental parameters, however, we can infer the linear model using  regression on the  data generated by running the CNN model $P$ with different \state and \envparams inputs. In the case of synthetic data generated using a simulator, as we do in our experiments, we can also label the $(\state,\envparams)$ data with  the corresponding ground truth value. This improves the precision of \Fapprox, by indirectly reducing the error with respect to the quantity that the CNN-based perception subsystem  $P$ is trained to estimate, namely, the ground truth.

While the center (mean) of the set $\Fmm(\state,\envparams)$ is defined by training data, the size and shape of the set (variance) is inferred from safety. Assume that the control system is safe with respect to a given unsafe set $\Unsafe$, when it uses  perfect perception.
Using program analysis tools like CBMC~\cite{CBMC} and IKOS~\cite{IKOS} on the code for the controller,
we infer the set of \emph{unsafe} outputs from \Fapprox for any $\state$.
Then, the set-valued output from $\Fapprox(\state,\envparams)$ is determined to be the \emph{largest set},
centered at  $\Ai(\Fmm(\state,\envparams)) + \bi$, that keeps the system safe.
The computation of this largest set is an optimization problem.

Our method produces intelligible abstractions. The resulting output abstraction \Fapprox is a piece-wise affine set-valued function of the actual variable that the perception system $P$ is trying to estimate.
The abstraction \Fapprox is guaranteed to be safe relative to the given property \Unsafe.
We check this using CBMC by plugging-in \Fapprox into the downstream modules of the  control system.
This safety-first approach can be precision-challenged under some conditions.
In our experiments with two end-to-end autonomous systems---vision-based lane keeping for an electric vehicle and a vision-based corn row scouting robot---
we evaluated the abstractions from large variations of environments such as roads with varying numbers of lanes, lighting conditions, different types of crops and fields,
generated with high fidelity simulator Gazebo.
We observe that  in certain parts of the state space,
the computed safe abstractions are not able to match the original perception subsystem $P$ very accurately due to the strong inductive invariant that we used as the requirement.
While for some other conditions, the precision of the abstraction can be over 90\% match (explained in detail in Section~\ref{subsec:precision}) with finer abstractions, narrower environmental variations, and alternative inductive requirements.

We believe that the trifecta of safety, intelligibility, and precision provides a useful framework for constructing abstraction and verifying  autonomous systems. In addition to providing assurances for safety requirements, the notion of precision of abstractions developed here can shed light on parts of the state space and environment where the CNN-based perception system is  fragile, and likely to violate requirements. These quantitative insights can  design  perception and control subsystems, as well as inform the definition of the system-level {\em operating design domains (ODDs)}~\cite{koopman2019many}.

In summary, our contributions of this work are:
\begin{itemize}
\item Formalization of an intelligible, safe, and precise abstractions for CNNs as a piece-wise affine set-valued function \Fapprox.
\item Approach to find \Fapprox through a combination of linear regression and constrained optimization.
\item Demonstration that \Fapprox can be composed with the downstream modules and verified using existing tools.
\item Quantitative evaluation of the precision of \Fapprox with respect to the original CNN based perception subsystem  $P$, and the interpretation of its dependence on various factors.
\end{itemize}

\section{Related works}
\paragraph{XAI}
The explainable AI (XAI) and interpretable machine learning (IML) research areas have been explosively growing over the past five years.
Figure~1 of \cite{IML-brief-history21} suggests that in 2020 there were 400+ publications related to interpretability alone. The survey articles~\cite{adadi2018xai,bodria2021xai-benchmarking,IML-brief-history21}
provide systematic overview of the terminologies and the available techniques for different types of AI models for  text, image, and tables.
Some of the prominent techniques rely on the notions of feature importance~\cite{apley2019visualizing},
Shapley values~\cite{janzing20a},
and counter-factual explanations~\cite{LaugelLMRD19}. Our piece-wise affine abstractions for CNNs is a natural interpretable model, but we have not seen this used in the literature so far.

Using the terminology of the XAI literature, our method provides \emph{model-agnostic}, \emph{global} interpretations of image-based AI models.
The notion of \emph{interpretability} itself has differing  definitions in the literature.
Our interpretations help with \emph{transparency}, or equivalently \emph{intelligibility}, in that, they help a human to understand the functioning of the black-box CNN.
Our interpretations and methods are model agnostic because they do not rely on the internal structure or workings of the CNN models, instead, they only require input-output data.
Also, out interpretations are \emph{global} in the sense that they provide interpretations for the entire domain.

% Our explanation/explication
% Global explainers for black-box models working image type data
% -- Feature importance:  a vector containing a vlue for each feature,
% --- Concept attribution
% -- Attention-based
% --- Anchor method from 103 (rule-based explanation)

% Model agnostic ``
% Explanation: ``why''
% Interpretability (or transparence): Describes the internals
% ``Interpretability[59], or comprehensibility [51], is the ability to explain or provide the meaning in understandable terms a human.''
% Examples and counter-examples (of a class).

% Most of the work is for classification tasks, Saliency Map of pixels.

% Closer to the context of our contributions, the survey~\cite{baier2021XAI} covers causality-inspired approaches for explaining observable behavior of reactive systems.

\paragraph{Analysis of closed loop systems with NNs}
The closest related works are VerifAI~\cite{verifAI} and a recent work \cite{katz2021verification}.
VerifAI~\cite{verifAI} and related publications provide a comprehensive framework to analyze a closed loop system with ML-based perception components.
They focus on the falsification of the system specification, data augmentation, and redesigning the neural network.
Their techniques include fuzz testing, simulation, counterexample guided data augmentation, syntheses of hyper-parameters and model parameters.
Our work provides a safe abstraction, and therefore complements the falsification approaches of VerifAI.

Our work is similar in spirit to  the white paper~\cite{pasareanu2018compositional} and the work reported in~\cite{katz2021verification}\footnote{Unpublished at the time of writing.}, in that, they propose using abstraction/contracts for perception components.
In~\cite{katz2021verification}, the authors train generative adversarial networks (GANs) to produce a simpler neural network that abstracts away image sensing and image based perception.
The simpler network directly transforms states and environment parameters to estimates similar to our abstraction \Fapprox. In comparison, our work provides an intelligible set-valued function instead of a simpler network.

In addition, \cite{dean2020a} considers synthesizing robust perception based controller. Plenty of recent works focus on verification~\cite{NNVTool,ivanov2020verify},
reachability analysis~\cite{dutta2019reachability,fan2020reachnn,verisig,hu2020reachsdcp,everett2021reachability},
statistical model checking~\cite{DSMC}, and synthesis~\cite{ivanov2021compositional} on \emph{neural feedback systems} with neural network controllers.
\cite{Julian2019GuaranteeingSF,julian2021acasxu} specifically focus on developing and verifying a neural network replacement for ACAS-Xu collision avoidance decision tables.
Such controller NNs are typically  very small compared to perception CNNs.
Our work is the \textbf{first} to provide intelligible abstractions for CNN-based perception and provides safety guarantees for the closed loop system.

\paragraph{Isolated neural network verification}
\cite{muvva2021uavlec} evaluates the safety assurance of ML-based Perception and Control in UAS via simulation
and discusses the challenges of integrating existing neural network verification tools for the system level analysis.
The authors point out that the neural network used for perception in small UAS is larger than any previously analyzed NN in the competition~\cite{bak2021second}.
GAS~\cite{GAS} analyzes the impact of NN perception uncertainty on vehicles using an approximated perception model and Generalized Polynomial Chaos. Unlike our approach, GAS only estimates the probability that the vehicle will reach an unsafe state. Its perception model approximation does not incorporate system-level specifications.

\begin{comment}
\chiao{TODO: some related topics below that are not top priority.}

\paragraph{Metrics for explanations/interpretations}

Qualitative:
Functionally grounded (do not require humans); improvement relative to another interpretable model.

Quantitative: How close the explanation is to the black-box model.
This is what our table is doing.
Completeness w.r.t. black-box model
Fidelity
Stability
Lipschitz Constant

OR completeness w.r.t. specific task

\paragraph{Operating design domains} Koopman~\cite{koopman2019safety}.
\paragraph{Runtime monitoring of systems with NNs}

There is a lot of recent work here.
We are different because we are trying to do offline verification/assurance, and hopefully ``covering more''.
\end{comment}

\section{System-level safety assurance}
\label{sec:prelim}
The problem of assuring safety of a control system can be stated as follows: given a control system or a program \Sys on a state space \States,
we would like to check that it satisfies an invariant $\Inv\subseteq\States$.
For example, for a lane keeping control system for a vehicle, the invariant requirement is that the vehicle always remains within the lanes.
This seemingly textbook statement of the problem is complicated by two factors in an actual autonomous system.
\begin{figure}
    \centering
    \includegraphics[width=0.3\linewidth,clip,trim=80cm 20cm 1cm 20cm]{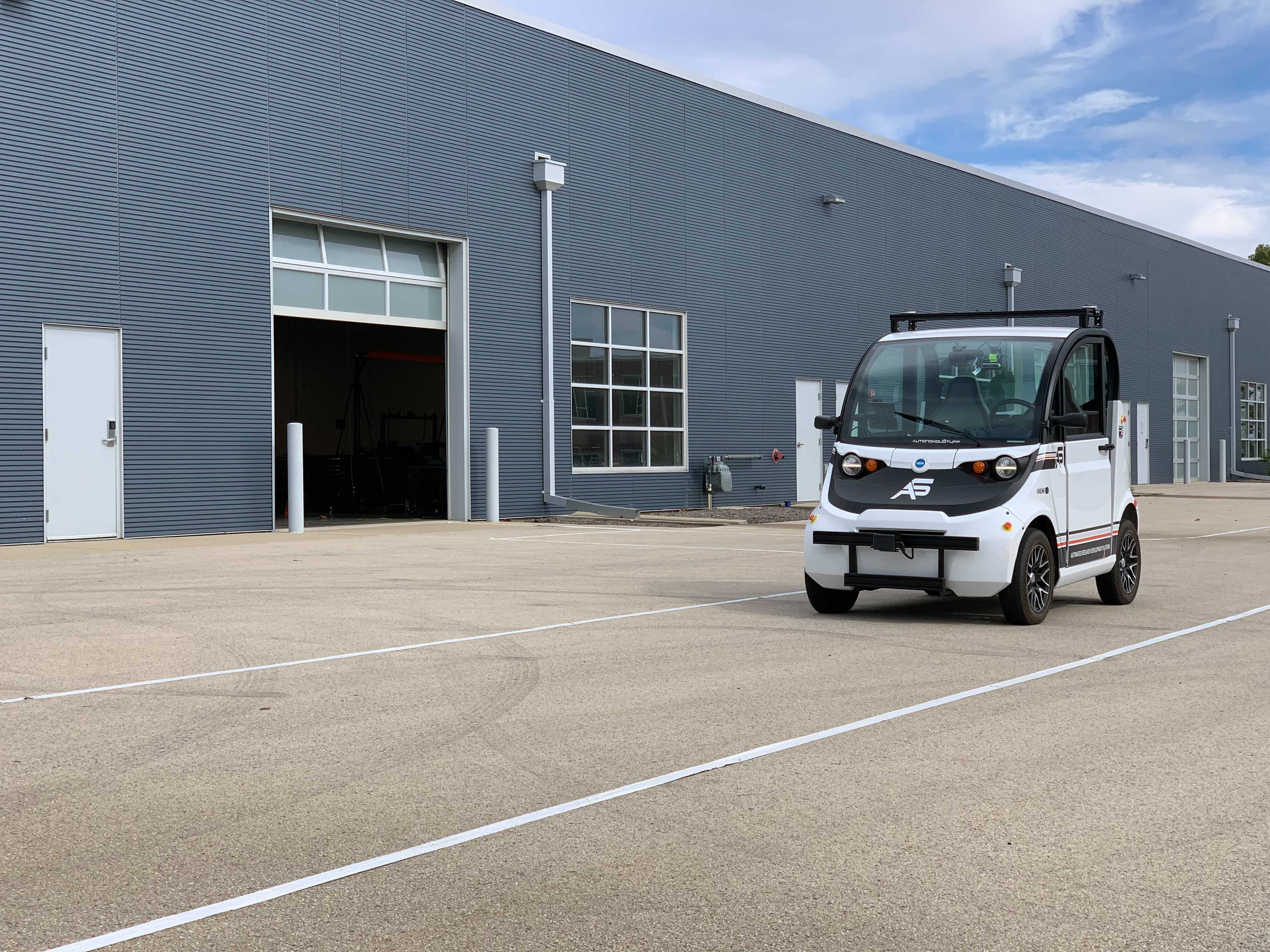}
    \includegraphics[width=0.24\linewidth,clip,trim=1cm 0cm 3cm 4.2cm]{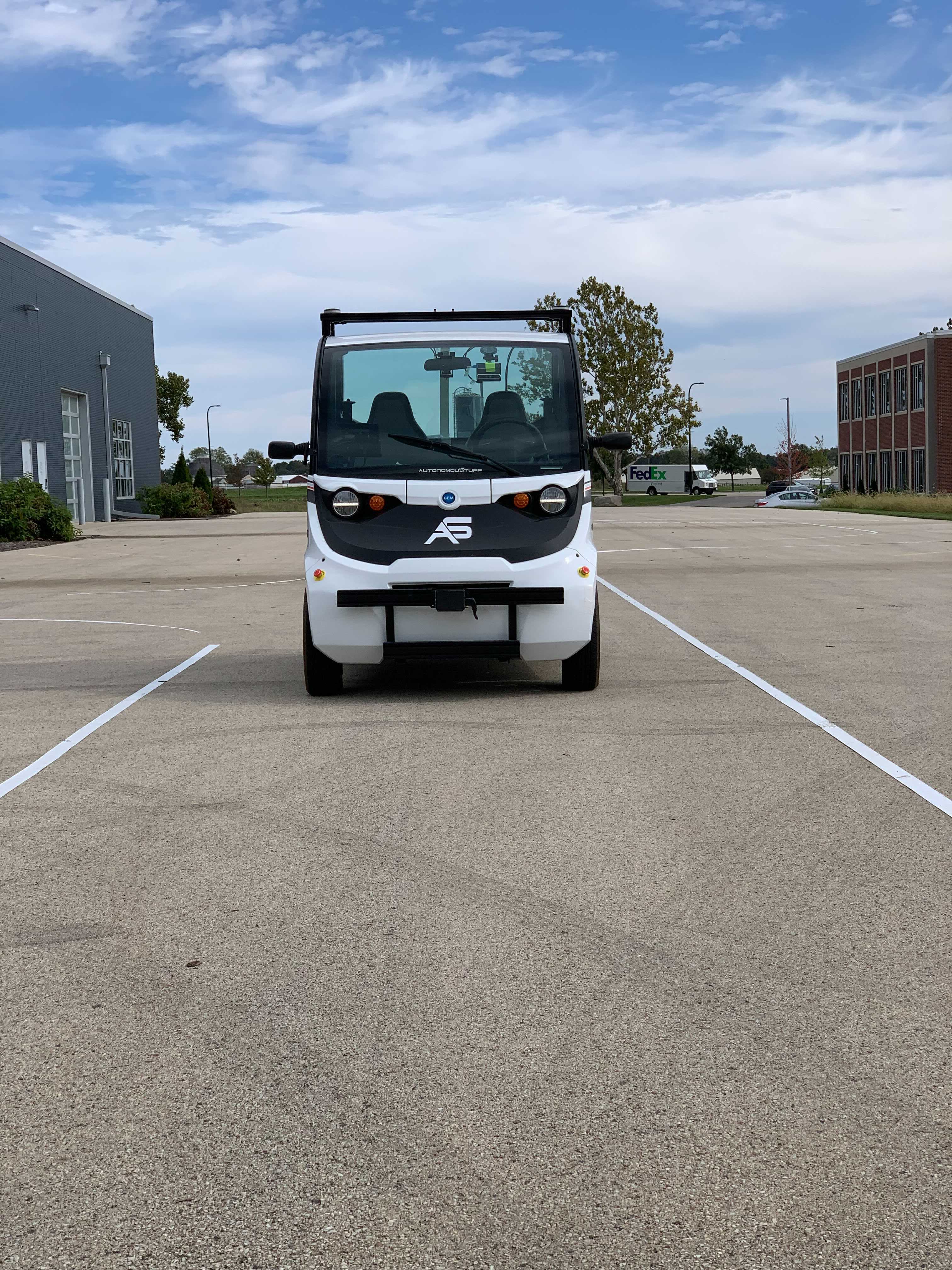}
    \includegraphics[width=0.42\linewidth,trim=20cm 0cm 15cm 15cm,clip]{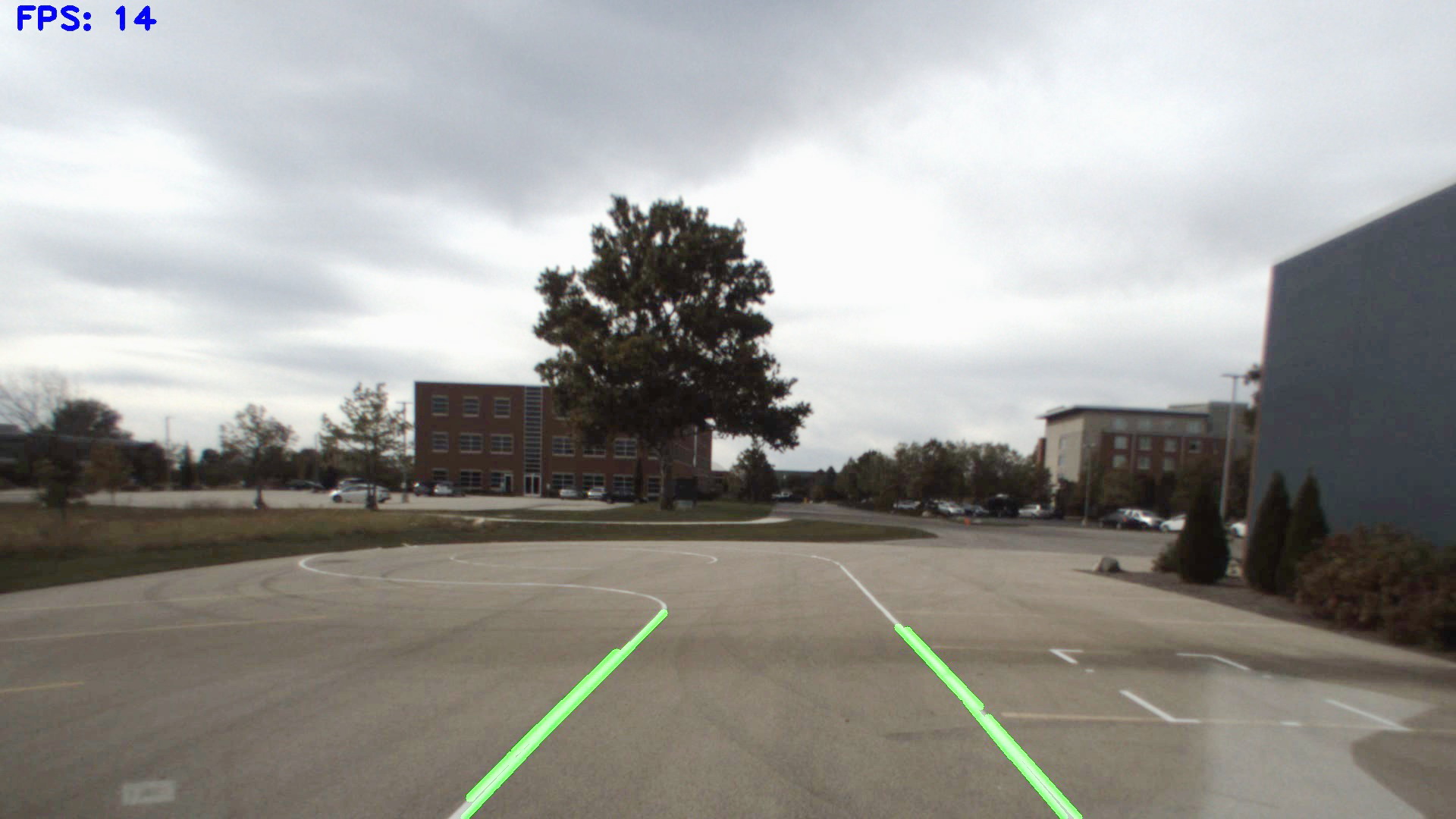}
    \caption{Vision-based lane keeping control on AV platform.}
    \label{fig:GEM}
\end{figure}

\begin{figure}[ht]
    \centering
    \includegraphics[width=\linewidth,clip,trim=0cm 0cm 0cm 0cm]{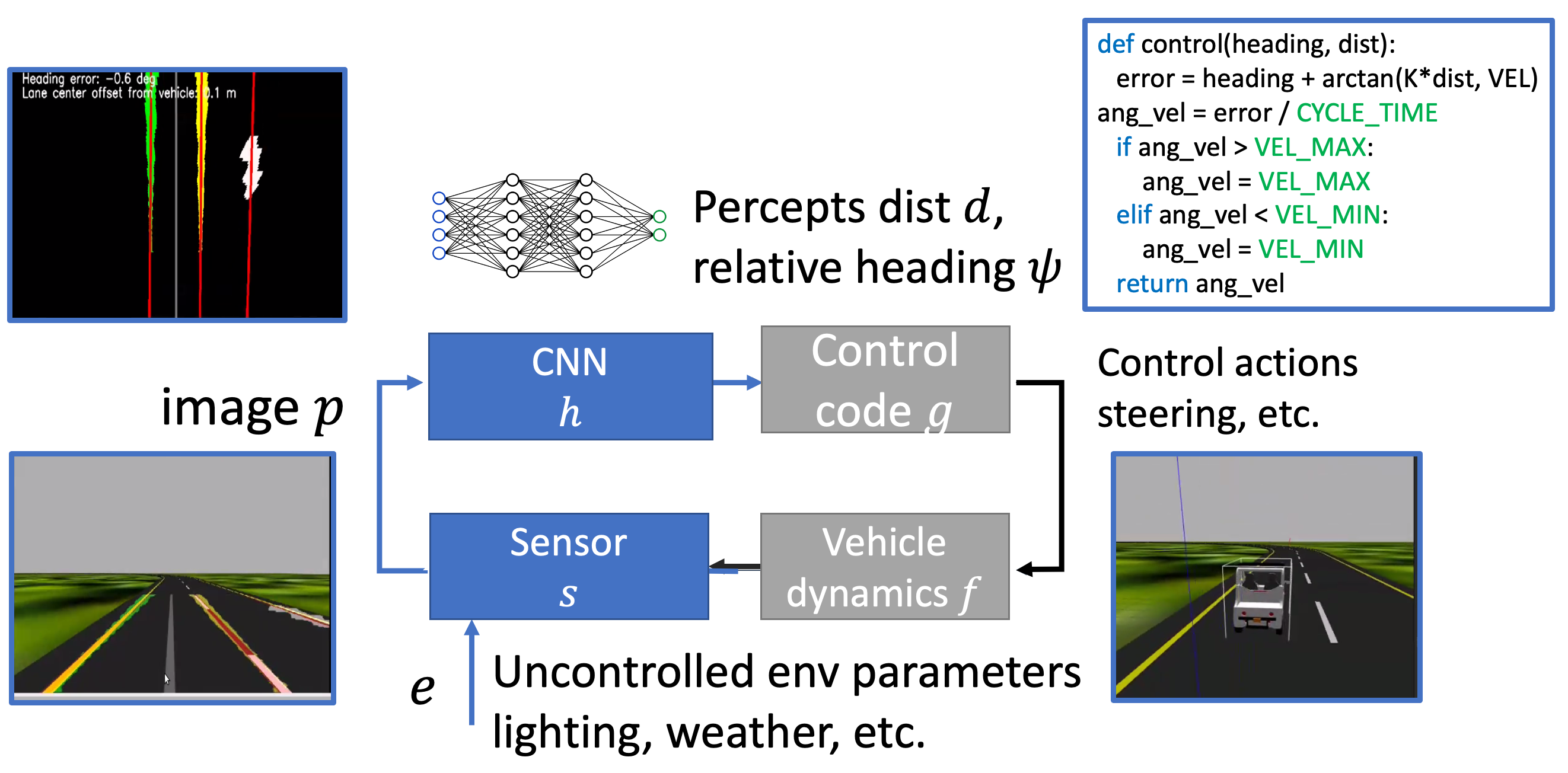}
    \caption{Closed-loop control system \Sys with camera and CNN-based perception.}
    \label{fig:sys}
\end{figure}

First, \Sys uses deep neural networks (DNNs) for perception -- converting pixels to \emph{percepts} such as deviation from lane-center,
and such perception systems are not amenable to formal specification and verification.
Secondly, the DNN's output depends on environmental factors \Envparams such as lighting, texture, and pavement moisture.
These dependencies are neither well-understood nor controllable.

\subsection{System description}
\label{sec:system}
Before stating our safety analysis problem more precisely, we set up a mathematical representation of \Sys.
We view \Sys as a discrete time transition system  with four interconnected components transforming different types of data and ultimately defining the state transitions (Figure~\ref{fig:sys}).

The \nnet is a deep neural network model for perception. It takes an image (or a high-dimensional vector) \image as an input and produces a percept or a low-dimensional \emph{estimate} vector  $\err = \Fnn(\image) $ as the output. In a lane tracking system, this percept $\err$ could be, for example, the position of the camera relative to the lanes seen in the image.
That is, we model the \nnet as a function $\Fnn: \Setimage \rightarrow \Seterr$ mapping the space of images $\Setimage$ to the space of percepts \Seterr.

The \controller module is a program that takes a percept \err as an input and produces a  control action $\ctrlout=\Fctrl(\err)$ as the output.
In a lane tracking system, the control action $\ctrlout$  is a vector of throttle, steering, and brake signals.
The implementation of the controller \controller may involve a number of modules including navigation, planning, and optimization.
For simplicity, we model the \controller as a function $\Fctrl:\Seterr \rightarrow \Setctrlout$ mapping the space of precepts to the space of control actions.

Then \dynamics defines the evolution of the system state $\state$ as a function of the previous state and the output from the \controller.
We model the \dynamics as a function $\Fdyn:\States \times \Setctrlout \rightarrow \States$.
In our example, the state \state of the vehicle includes its position, orientation, velocity, etc.,
and the dynamics function defines how the state changes with a given control action $\ctrlout \in \Setctrlout$.
In this paper, we consider discrete time models, and write the state at time $t+1$ as
\[
\state_{t+1} = \Fdyn(\state_t, \ctrlout_t),
\]
where $\state_t$ and $\ctrlout_t$ are the state and the control actions at time $t$.
This state transition function could be generalized to a relation to accommodate uncertainty, without significantly affecting our framework or the results.

The final component closing the loop is the \camera which defines the image \image as a function of the current state \state and a set of  non-time varying, \emph{environmental parameters} \envparams.
In our example, these parameters include, for example, lighting conditions, nature of the road surface, types of markings defining lanes, etc.
We model the \camera as a function $\Fcam: \States \times \Envparams \rightarrow \Setimage$, where $\Envparams$ is the space of environmental parameter values.
In a real system, we may not know all the environmental parameters, they may not be time-invariant, and their precise functional influence on the image will also be unknown.
Therefore, it does not make sense to prove anything mathematically about \Fcam.
For the purpose of generating abstractions of $\Fnn\circ\Fcam$, we reasonably assume that we can sample inputs of $\Fcam$ according to some distribution over $\Envparams$ and $\States$. In our experiments, we generate synthetic data using a simulator, and the same could also be done with the actual vehicle platform at a higher cost.

\subsection{Assurances for closed-loop system}
The behaviors of the overall system are modeled as sequences of states called {\em executions\/}. Given an initial state $\state_0 \in \States$ and an environmental parameter value $\envparams \in \Envparams$, an execution of the overall system $\alpha(\state_0,\envparams)$ is a sequence of states $\state_0, \state_1, \state_2, \ldots$ such that for each index $t$ in the sequence:
\begin{eqnarray}
\state_{t+1} = \Fdyn(\state_t, \Fctrl(\Fnn(\Fcam(\state_t,\envparams)))).
\label{eq:closed-system}
\end{eqnarray}
In an ideal world, we would like to have methods  that can assure that, given a range of environmental parameter values $\Envparams_0 \subseteq \Envparams$, an unsafe set $U \subseteq \States$,  and a set of initial conditions $\States_0 \subseteq \States$, none of the resulting executions of the system from $\States_0$ can reach $U$ under any choice of $\Envparams_0 $. Such a method will be a useful tool in checking safety of autonomous systems. Also, in indirectly helping determine   $\Envparams_0$ for which the system can be assured to be safe, such methods can be used as a scientific basis for specifying the {\em operating design domain (ODD)}~\cite{koopman2019safety} for the control system.
\begin{comment}
By Kristina: Both control space $\Setctrlout$ and unsafe set $U$ use ''U'', which might be confusing
\end{comment}

Since, the functions  $\Fcam$ and $\Fnn$ are partially unknown with unknown dependence on $\envparams$ and $\state$, it is unreasonable to look for the above type of methods. Instead, in this paper, we develop a method for the following weaker problem:
\paragraph{{\bf Problem.}}
Given an unsafe set $\Unsafe \subseteq \States$ and a  range of environmental parameters $\Envparams_0 \subseteq \Envparams$, find an {\em abstraction\/} $\Fapprox_U$ of the perception system $\Fnn\circ\Fcam$, such that it is:
\begin{enumerate}[(a)]
\item {\em Safe\/}, i.e., $\Fapprox_U$  used in the  closed  loop  system substituting $\Fnn\circ\Fcam$ makes the resulting system safe with respect to $\Unsafe$.
\item {\em Intelligible\/}, i.e., human designers can understand the  behavior of $\Fapprox_U$.
 \item {\em Precise\/}, that is, $\Fapprox_U$ and $\Fnn\circ\Fcam$ are close.
\end{enumerate}

\begin{comment}
By Kristina: Introduce abstractions somewhere before the problem statement?
\end{comment}

%\paragraph{{\bf Remarks}}
The goal of this paper is to explore strategies for creating such abstractions.
% We will call $\Fapprox_U$ an {\em intelligible safe abstraction}.
%
Note that the  construction of $\Fapprox_U$ will rely on both the knowledge of the unsafe set $U$ and
the range of environmental parameters $\Envparams$ under consideration.

For the substitution to make sense, the  abstraction $\Fapprox$ must have the same  type as   $\Fnn\circ\Fcam$, however, to allow it higher precision across different environments, we make it a set valued function. That is, $\Fapprox_U: \States\times\Envparams \rightarrow 2^\Seterr$.

Since, the actual perception system $\Fnn\circ\Fcam$ and its dependence on the environment $\Envparams$ is incompletely understood, any assertion about the closeness to the abstraction $\Fapprox_U$ will have to be empirically evaluated.
There are many options for measuring closeness that can factor in information about the environmental parameters.
We will see later that indeed fine-grained measurement of closeness is possible.
We will also discuss how such comparisons can guide both the process of data collection for more precise empirical evaluations,
as well as the inference of operating design domains for the system.

\subsection{An example: Vision-based lane keeping}
\label{sec:ex1}

As a example, we consider a computer vision-based lane keeping control system as shown in Figures~\ref{fig:GEM} and~\ref{fig:sys}.

\paragraph{Dynamics and control.}
The vehicle state $\state \in \States$ consists of the 2D position $(x, y)$ of the center of the front axle in a global coordinate system, and the heading angle $\theta$ w.r.t the $x$-axis.
The input $\ctrlout \in \Setctrlout$ is the steering angle $\delta$.
The discrete time model for the above state vector is the well-known kinematic bicycle function~\cite{frazzoli_survey_2016} $\Fdyn(\state, \ctrlout)$:
\begin{align*}
x_{t+1} &= x_t + \vf\cdot\cos(\theta_t + \delta)\cdot\dT \\
y_{t+1} &= y_t + \vf\cdot\sin(\theta_t + \delta)\cdot\dT \\
\theta_{t+1} &= \theta_t + \vf\cdot\frac{\sin(\delta)}{L}\cdot\dT
\end{align*}
where $\vf$ is the  forward velocity, $L$ is the wheel base, and $\dT$ is a time discretization parameter.
% Here we are using the simple kinematic bicycle model for the vehicle which is widely used in vehicle planning literature.
We discuss the impact of different vehicle models on our methodology in Section~\ref{sec:discussion}.

\begin{wrapfigure}{r}{0.22\textwidth}
    \centering
    \includegraphics[width=0.22\textwidth]{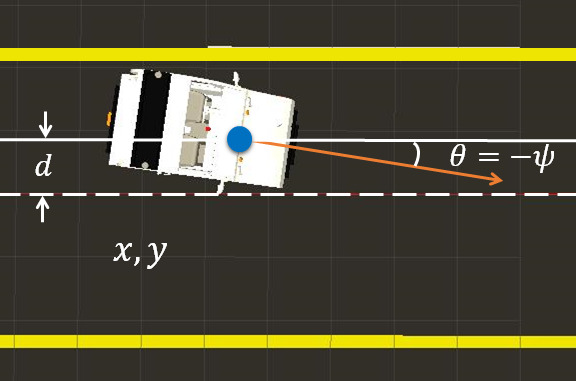}
    \caption{\small State  $(x,y,\theta)$ and perception variables $(d,\psi)$ for lane keeping.}
    \label{fig:lane-err}
\end{wrapfigure}
The input to the dynamic function $\Fdyn$ comes from the decision and control program. Here we use the standard Stanley controller~\cite{hoffmann_stanley_2007} used for lateral control of vehicles. This controller uses the percept $\err \in \Seterr$, which consists of the heading difference $\psi$ and cross track distance $d$ from the center of the lane to the ego-vehicle.
In Figure~\ref{fig:lane-err} the heading $\theta$ coincides with the negation of heading difference $-\psi$,
but this happens only in the special case where the lane is aligned with the $x$-axis.

The controller function $\Fctrl(\err)$ is defined as:
\[
\delta = \Fctrl(d, \psi) = \begin{cases}
\psi + \arctan\left(\frac{K\cdot d}{\vf}\right), & \text{if } \left|\psi + \arctan\left(\frac{K\cdot d}{\vf}\right)\right| < \delta_{max} \\
 \delta_{max}, & \text{if } \psi + \arctan\left(\frac{K\cdot d}{\vf}\right) \geq \delta_{max} \\
-\delta_{max}, & \text{if } \psi + \arctan\left(\frac{K\cdot d}{\vf}\right) \leq -\delta_{max} \\
\end{cases}
\]
where $\delta_{max}$ is the steering angle limit and $K$ is a controller gain parameter.

\paragraph{Perception}
Now we describe the complex perceptual part of the system that estimates heading difference $\psi$ and cross track distance $d$ using computer vision.
First, the sensor function \Fcam uses cameras to capture an image, and processes the image through a sequence of computer vision algorithms including cropping the region of interest,
undistortion, morphological transformations, resizing, etc., to prepare the input image \image for the DNN. The particular DNN used here is
LaneNet~\cite{nevan_lanenet_2018} which uses  $512\times256$ RGB images to detect lane pixels.
Internally, LaneNet contains two sub-nets for both the identification and instance segmentation of lane marking pixels;
then a curve fitting algorithm is applied on identified lane marking pixels to represent each detected lane as a polynomial function.
Further, the perspective warping is applied to map the polynomial function to bird's eye view,
which gives the final percept $\err = (d, \psi)$ of the relative position of the vehicle to the lane center ($d$) and the heading difference ($\psi$), as shown in Figure~\ref{fig:sys}.

\paragraph{System safety requirement.}
A common specification for lane keeping control is to avoid going out of the lane boundaries.
We assume that the vehicle is driving on a straight road with lane width $W$. For the purpose of simplifying exposition, we assume that the center line is aligned with the $x$-axis of the global coordinate system. Thus, the unsafe set can be specified as:
$$
\Unsafe = \left\{(x,y,\theta) \mid |y| > \frac{W}{2}\right\}.
$$

\begin{comment}
\subsection{Ideal System}
\label{sec:perfect}

We define an ideal system $\mathit{Sys}^\star$ by replacing the sensor-DNN subsystems of \Sys in Figure~\ref{fig:sys} with a perfect
measurement function $\Fmm:\States\times\Envparams \rightarrow \Seterr$.
\sayan{How can we define this function?}

With the perfect perception function $\Fmm$, we can construct the ideal system $\IdealSys$:
\[
\state_{t+1} = \Fdyn(\state_t, \Fctrl(\Fmm(\state_t, \envparams)))
\]

In simulation, we can implement $\Fmm$ to return the ground truth $\err^*$ because we know exactly the arrangement of the crops that defines the lane.
\end{comment}

%\input{sec-contract}
\section{Safe Intelligible Abstractions}
\label{sec:synthesis}

In this section we will discuss our method for constructing the  the abstraction $\Fapprox$ for the perception system $\Fnn\circ\Fcam$.  $\Fapprox$ will be a piece-wise affine set-valued function of the ground truth value that $\Fnn\circ\Fcam$ is supposed to estimate.
Section~\ref{sec:abs-system} sets the stage showing how a set-valued abstraction of $\Fnn\circ\Fcam$ defines an abstraction of the original control system \Sys, and therefore, can be useful for verification. Section~\ref{sec:cons-sia} presents the main algorithm that learns, from perception data, the center (mean) of the output set $\Fapprox(\state,\envparams)$. Section~\ref{subsec:safe-precise} defines the next step in the construction of $\Fapprox$. This step analyzes  the control program $\Fdyn \circ \Fctrl$ to optimize the shape and the size of the output set around the mean, to assure  the safety of the abstract system with $\Fapprox$ with respect to the unsafe set $\Unsafe$. 
Section~\ref{sec:soundness} establishes the safety of the constructed abstraction, not only at the theoretical model level, but it also shows how $\Fapprox$ can be plugged in to the rest of the $\Sys$ code and verified using program analysis tools, namely CBMC~\cite{} in our work. Finally,  Section~\ref{subsec:precision} discusses our methods for empirically evaluating the  precision of $\Fapprox$.

\subsection{Abstract perception in closed-loop}
\label{sec:abs-system}

We will construct a set-valued perception function  $\Fapprox$ that abstracts the complex perception system $\Fnn\circ\Fcam$ and meets the three requirements of safety, intelligibility, and precision.
For the safety requirement, our constructed function $\Fapprox: \States\times\Envparams \rightarrow 2^\Seterr$ should be such that when it is ``substituted'' in the closed loop system of Equation~(\ref{eq:closed-system}), the resulting system is safe with respect to the requirement $\Unsafe$.
Formally, substituting $\Fnn\circ\Fcam(\state, \envparams)$ with $\Fapprox(\state, \envparams)$,
the result is the non-deterministic system $\ApproxSys(\Fapprox)$ given by:
\begin{eqnarray}
\state_{t+1} \in \{\Fdyn(\state_t, \Fctrl(\err)) \mid \exists \err \in \Fapprox(\state_t, \envparams) \}.
\label{eq:abs-sys}
\end{eqnarray}
That is, when the actual system state is $\state_t$ (and the environmental parameters $\envparams$), then the output from the abstract perception function $\Fapprox$ can be {\em anything\/} in the set $\Fapprox(\state_t, \envparams)$. This set-valued approach is a standard way for modeling noisy sensors.

\begin{definition}
\label{def:overapprox}
A function $\Fapprox: \States\times\Envparams \rightarrow 2^\Seterr$ is an \emph{abstraction} of $\Fnn\circ\Fcam$ if:
\begin{eqnarray}
\forall \envparams\in\Envparams, \forall \state \in \States. \Fnn\circ\Fcam(\state, \envparams) \in \Fapprox(\state, \envparams).
\label{eq:perc-abs}
\end{eqnarray}
\end{definition}
This definition ensures $\Fapprox(\state, \envparams)$ covers all possible percepts  from $\Fnn\circ\Fcam(\state, \envparams)$ for all states and environments.
If a function $\Fapprox$ is an abstraction of $\Fnn\circ\Fcam$, then it follows that $\ApproxSys(\Fapprox)$ is an abstraction of \Sys, that is, the set of executions of $\ApproxSys(\Fapprox)$ contains the executions of \Sys. Therefore, any state invariant $\Inv \subseteq \States$ for $\ApproxSys(\Fapprox)$ carries over as an invariant of $\Sys$.

\begin{proposition}
\label{prop:abs}
If  $\Fapprox$ is an abstraction of $\Fnn\circ\Fcam$ then $\ApproxSys(\Fapprox)$ is an abstraction of $\Sys$.
\end{proposition}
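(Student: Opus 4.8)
The plan is to unfold the definition of ``$\ApproxSys(\Fapprox)$ is an abstraction of $\Sys$'' as containment of execution sets, and then discharge it by a short induction on time that invokes the perception-abstraction property~(\ref{eq:perc-abs}) at each step.

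Concretely, I would fix an arbitrary environmental parameter $\envparams \in \Envparams$ and an arbitrary initial state $\state_0 \in \States$, let $\state_0, \state_1, \state_2, \ldots$ be any execution of $\Sys$ from $\state_0$ under $\envparams$ (so Equation~(\ref{eq:closed-system}) holds at every index $t$), and argue that the very same sequence is an execution of $\ApproxSys(\Fapprox)$, i.e., it satisfies the transition relation of Equation~(\ref{eq:abs-sys}) at every $t$. The base case is immediate: the two systems share the same state space and the same set of initial states, so $\state_0$ is a legal start state of $\ApproxSys(\Fapprox)$. For the inductive step, assume $\state_{t+1} = \Fdyn(\state_t, \Fctrl(\Fnn(\Fcam(\state_t, \envparams))))$ and set $\err^* = \Fnn\circ\Fcam(\state_t, \envparams)$. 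Definition~\ref{def:overapprox}, instantiated at $(\state_t, \envparams)$, gives $\err^* \in \Fapprox(\state_t, \envparams)$, so $\err^*$ witnesses the existential quantifier in Equation~(\ref{eq:abs-sys}); hence $\state_{t+1} = \Fdyn(\state_t, \Fctrl(\err^*))$ lies in the successor set $\{\Fdyn(\state_t, \Fctrl(\err)) \mid \exists\, \err \in \Fapprox(\state_t, \envparams)\}$, and the transition $\state_t \to \state_{t+1}$ is permitted by $\ApproxSys(\Fapprox)$. Since $\envparams$, $\state_0$, and the execution were arbitrary, every execution of $\Sys$ is an execution of $\ApproxSys(\Fapprox)$, which is exactly the definition of $\ApproxSys(\Fapprox)$ being an abstraction of $\Sys$. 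The noted consequence about invariants then follows at once: the reachable states of $\Sys$ form a subset of those of $\ApproxSys(\Fapprox)$, so any $\Inv \subseteq \States$ that contains all reachable states of $\ApproxSys(\Fapprox)$ also contains all reachable states of $\Sys$.

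I do not expect a real obstacle here; essentially everything is bookkeeping about quantifiers and the meaning of ``execution of a nondeterministic system''. The one point worth stating explicitly is that $\envparams$ is non-time-varying, so it is the same $\envparams$ that appears at every step of both executions, which is what makes the per-step application of~(\ref{eq:perc-abs}) legitimate. If one later wants the stronger version in which $\Fdyn$ is itself a relation (as anticipated after Equation~(\ref{eq:closed-system})), the argument is unchanged: one simply carries the chosen dynamics outcome as a second witness alongside $\err^*$.
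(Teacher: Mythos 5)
Your proof is correct and follows essentially the same route as the paper: the paper's own argument is exactly the per-step containment $\Fdyn(\state, \Fctrl(\Fnn\circ\Fcam(\state, \envparams))) \in \{\Fdyn(\state, \Fctrl(\err)) \mid \exists \err \in \Fapprox(\state, \envparams)\}$ obtained from Equation~(\ref{eq:perc-abs}), which you merely make explicit by wrapping it in an induction over time indices with the concrete percept as the existential witness. Your added remark that $\envparams$ is non-time-varying is a fair point of care, but no new idea is involved.
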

Fixing an arbitrary initial state $\state_0$ and an environment $\envparams$, this follows immediately from Equation~(\ref{eq:perc-abs}) by deriving:
\[
   \Fdyn(\state, \Fctrl(\Fnn\circ\Fcam(\state, \envparams))) \in \{\Fdyn(\state, \Fctrl(\err)) \mid \exists \err \in \Fapprox(\state, \envparams) \}.
\]
%Hence, \ApproxSys will include all executions by \Sys.

\begin{comment}
By Kristina: I'm a little confused here.
So first you describe a system $\ApproxSys(\Fapprox)$.
Then, Definition~\ref{def:overapprox} defines an abstraction for $\Fcam \circ \Fnn$.
Proposition~\ref{prop:abs} says that $\ApproxSys(\Fapprox)$ is an abstraction of $\Sys$.
Shouldn't there be some conditions on $\Fdyn$ to make this the case?
\end{comment}

Definition~\ref{def:overapprox} is too general to be useful for constructing safe, intelligible, and precise abstractions. At one extreme, it allows the definition $\Fapprox(\state,\envparams) := \{\Fnn\circ\Fcam(\state,\envparams)\}$ which is exactly same as the original perception system, but does not help with  intelligibility nor with safety.
At the other end,
we can make $\Fapprox(\state,\envparams)$ to be all of $ \Seterr$, which may be intelligible but not useful for safety.

Our approach is to \emph{utilize available information about safety of the control system without perception}.
Informally, consider a version of the closed loop control system that  uses the ground truth values of $\psi,d$ instead of relying on camera and DNNs to estimate these values.
In order to prove safety of this {\em ideal system\/} with respect to $\Unsafe$, we can use standard invariant assertions~\cite{prajana2004barrier, sankaranarayanan2004invariant, platzer2008logic, platzer2010difflogic,MitraCPSBook2021}.
We will construct the abstraction \Fapprox for \Sys in a way that can utilize the knowledge such invariants.

\begin{definition}
\label{def:safety-dir}
Given a set $\Inv \subseteq \States$ and an abstract perception function  $\Fapprox: \States\times\Envparams \rightarrow 2^\Seterr$,
\Fapprox preserves \Inv if
\[
\forall \envparams\in\Envparams.\forall \state \in \Inv, \forall \err \in \Fapprox(\state, \envparams), \Fdyn(\state,\Fctrl(\err)) \in \Inv.
\]
\end{definition}
Finding an invariant preserving abstraction satisfying Definition~\ref{def:safety-dir} will guide us towards creating more practical abstractions of the perception system.

\subsection{Learning piece-wise abstractions from data}
\label{sec:cons-sia}

\SetKwProg{prog}{Function}{}{}
\SetKwFunction{Fmain}{ComputeAbstraction}
\SetKwFunction{Fregression}{LinearRegression}
\SetKwFunction{Fmindist}{MinDist}

For the abstract perception function $\Fapprox:\States \times \Envparams \rightarrow 2^\Seterr$ to be intelligible, for any $\state \in \States$ and $\envparams \in \Envparams,$ the output $\Fapprox(\state,\envparams)$ should be related to the ground truth value $\err^* \in \Seterr$ that the perception system is supposed to estimate.
For example, for a given state $\state = (x,y,\theta)$ of the vehicle in the lane keeping system, and a given configuration of the lanes defined by $\envparams$, the ground truth $\err^* = (d^*, \psi^*)$---consisting of the relative position to lane center ($d^*$) and the angle with the lane orientation ($\psi^*$)---is uniquely determined by the geometry of the vehicle, the camera, and the lanes.
The perception system $\Fcam \circ \Fnn$ is designed to capture this functional relationship between $\state$ and $\err$ (and it is affected by the environment $\envparams$).
For the sake of this discussion, let $\Fmm(\state) = \err^*$ be the idealized function that gives the ground truth percept $\err^*$ for any state $\state$.
We may not know $\Fmm$ and only have access to it as a black-box function.
Nevertheless, a well-trained and well-designed perception system $\Fnn\circ\Fcam$  should minimize the error\footnote{The precise choice of the error function is a design parameter and we will discuss this further in later sections.}  $||\Fmm(\state) - \Fnn\circ\Fcam(\state,\envparams)||$ over  relevant states and environmental conditions. As $\Fapprox$ is  an abstraction of $\Fnn\circ\Fcam$, therefore, to achieve precision, $\Fapprox$ should also minimize error with respect to $\Fmm(\state)$.

In this paper, we consider a  piece-wise affine structure of the abstraction $\Fapprox$.
This is an expressive class of functions with conceptual and representational simplicity, and hence human readable and comprehensible.
First, given a partition  $\{\States_i\}_{i=1\dots N}$ of the target invariant domain,
i.e., $\Inv= \bigcup_{i=1}^N \States_i$,
we define \Fapprox as:
\[
\Fapprox(\state, \envparams) = \begin{cases}
\Ftol_1(\Fmm(\state)), &\text{iff} \ \state\in\States_1 \\
\qquad \vdots \\
\Ftol_N(\Fmm(\state)), &\text{iff}\ \state\in\States_N
\end{cases}
\]
where we search for
$\Ftol_i:\Seterr\rightarrow 2^\Seterr$ that returns a neighborhood around $\Fmm(\state)$.

In what follows, we will show how $\Ftol_i$'s can be derived as a linear function of $\Fmm(\state)$ that is both safe with respect to the target invariant $\Inv$ and minimizes error with respect to training data samples available from the perception system.
\Fmain gives our algorithm for computing this abstraction for each partition $\States_i$.

\begin{comment}
\sayan{PRESENT the abstraction generation as pseudocode.
Input the partitions, training data generated according to some distribution. Output. $A_i, b_i, C_i, \delta_i$?
This may use subroutines that are implemented using optimization, CBMC.}

\sayan{Sentences from previous version that got dropped.}
Note that the perfect estimation does not depend on environment parameters \envparams following our assumption.

Given a ground truth $\err^*=\Fmm(\state)$,
the perceived value $\err=\Fnn\circ\Fcam(\state,\envparams)$ should try to minimize the distance to $\err^*$ with $\envparams\in\Envparams_0$
but can cause unsafe execution when $\envparams\notin\Envparams_0$.

Observe that Definition~\ref{def:safe-neighbor} no longer depends on the complicated sensor-DNN subsystem $\Fnn\circ\Fcam$.
Subsequently, we are able to infer the safety-ensuring abstraction $\Fapprox$ by inferring the safe neighbor function $\Ftol_i$ for each $\States_i$.
\end{comment}

\begin{algorithm}[h]
\KwIn{Subspace $\States_i$; Invariant \Inv; Dynamics \Fdyn; Control \Fctrl; Perfect Estimation \Fmm}
\KwData{Training set of ground truth vs perceived percepts $L=\{(\err^*_1, \err_1),\dotsc,(\err^*_{|L|}, \err_{|L|})\}$}
\KwOut{Linear Transform Matrix \Ai; Translation Vector \bi; Safe Radius \ri}

\prog{\Fmain}{%
\Ai, \bi $\gets$ \Fregression{$L$}\;\label{line:lin-reg}

$\ri \gets$ \Fmindist{\Ai,\bi,$\States_i$,\Inv,\Fdyn,\Fctrl,\Fmm}\;\label{line:mindist}

\Return \Ai, \bi, \ri\;
}
\caption{Construction of abstraction $\Fapprox$ for  partition $\States_i$. The output set is resented by a center defined by transformation matrix \Ai and a  vector \bi, and a ball around the center defined by  \ri.}\label{alg:infer-safe-nbr}
\end{algorithm}

To find a candidate $\Ftol_i: \Seterr \mapsto 2^\Seterr$ for a given subset $\States_i \subseteq \States$,
we consider that, when given $\err^*$ as input, $\Ftol_i$ returns a parameterized ball defined as below:
\[
\Ftol_i(\err^*) = \{\err \mid \lVert \err - (\Ai\times\err^* + \bi) \rVert \leq \ri\}
\]
where the parameters \Ai and \bi define an affine transformation from $\err^*$ to the ball's center,
and \ri defines the radius.
Here we are using a ball defined by the $\ell_2$ norm on \Seterr. Our approach generalizes to other norms and linear coordinate transformations. We discuss other norms and their effects in Section~\ref{sec:discussion}.

We start with the input to \Fmain in Algorithm~\ref{alg:infer-safe-nbr}.
Besides the subset $\States_i\subseteq\States$, the invariant \Inv, aforementioned modules \Fdyn, \Fctrl, and \Fmm,
\Fmain also requires a \emph{training set} of pairs $(\err^*, \err)$
where the $\err^*=\Fmm(\state)$ is the ground truth,
and $\err=\Fnn\circ\Fcam(\state,\envparams)$ is the percepts obtained with the vision and DNN based perception.
These pairs can be obtained from existing labeled data for training DNNs.
A labeled data point for DNNs \Fnn is already an image $\image=\Fcam(\state,\envparams)$ sampled from \States and \Envparams
and its labeled ground truth $\err^*=\Fmm(\state)$.
In practice, the state $\state=(x, y, \theta)$ can be obtained from other more accurate sensors such as GPS to label the images.
We filter the state $\state$ with the subset $\States_i$ and obtain the ground truth $\err^*=\Fmm(\state)$.
We then simply collect the perceived $\err=\Fnn(\image)$ by applying DNN on image \image.

\Fmain first uses the training set of pairs of $(\err^*, \err)$ to learn $\Ai$ and $\bi$ using multivariate linear regression.
The next section describes how it infers a safe radius \ri around the center $\Ai\times\Fmm(\state)+\bi$ by solving a constrained optimization problem.

\subsection{Making abstractions safe and precise}
\label{subsec:safe-precise} 

The \Ai and \bi computed from multivariate linear regression in Line~\ref{line:lin-reg} of \Fmain implicitly the center $\Ai\times\Fmm(\state) + \bi$ that minimizes distance to the  training data in $\States_i$.
Next, we would like to infer a safe radius \ri around the center $\Ai\times\Fmm(\state) + \bi$.
There is a tension between safety and precision in the choice of $\ri$. On one hand, we want a larger radius \ri to cover more samples, making $\Fapprox$ more conservative approximation of $\Fnn\circ\Fcam$.
On the other hand, the neighborhood should not include any \emph{unsafe perception values} that can cause a violation of the invariant \Inv.

\begin{algorithm}[h]
\DontPrintSemicolon
\KwIn{%
Linear Transform Matrix \Ai; Translation Vector \bi;\linebreak
Subspace $\States_i$; Invariant \Inv;\linebreak
Dynamics \Fdyn; Control \Fctrl; Perfect Estimation \Fmm}
\KwOut{%
Safe radius $\ri \in \mathbb{R}_{\geq0}$
}
\prog{\Fmindist}{
    solver.addVar($\state, \err,\state'$)\;
    solver.addConstraints($\state \in \States_i, \state'=\Fdyn(\state, \Fctrl(\err)), \state' \notin \Inv$)\;
    solver.setObjective($\lVert \err - (\Ai\times\Fmm(\state) + \bi) \rVert$)\;
    $\mathit{status}, \hat{r}, b$ = solver.minimize()\;\label{line:solver-outcome}
    
    \If{status = OPTIMAL $\lor$ status = SUBOPTIMAL}{
        $\ri \gets \hat{r} - b$\;\label{line:solve-opt}
    }\Else(\tcp*[h]{status = INFEASIBLE}){%
        $\ri \gets +\infty$\;
    }
    \Return \ri
}

\caption{Minimum distance to unsafe perception values.}\label{alg:min-dist}
\end{algorithm}

Formally, the set of unsafe percepts is
\(
\{\err \mid \exists \state\in\States_i. \Fdyn(\state,\Fctrl(\err))\notin\Inv\}
\)
and should be disjoint with the safe neighborhood.
Figure~\ref{fig:ex-tolerable} illustrates such a safe neighborhood for one particular state \state.
Note that $\Ftol_i$ has to extend to all states $\state\in\States_i$,
and hence we need to find a minimum \ri for any $\state\in\States_i$.
At the same time we would also like \ri as large as possible to cover more perceived values.
Further, Figure~\ref{fig:agbot5x5} shows we have to infer for all $\States_i$ in the partition.

Our solution is to find an \ri just below the minimum distance $r^*$ from the center $\Ai\times\Fmm(\state) + \bi$
to the set of unsafe percepts.
This is formalized as the constrained optimization problem below:
\begin{align*}
r^* = \min_{\state, \err,\state'} \quad & \lVert \err - (\Ai\times\Fmm(\state) + \bi) \rVert \\
                      \text{s.t.} \quad & \state \in \States_i, \state'=\Fdyn(\state, \Fctrl(\err)), \state' \notin \Inv
\end{align*}
Observe that $\state\in\States_i$ is a set of simple bounds on each state variables by designing the partition.
$\state'\notin\Inv$ is simply the invariant predicate over state variables.
However, the third constraint $\state'=\Fdyn(\state, \Fctrl(\err))$ encodes the controller \Fctrl and dynamics \Fdyn in optimization constraints.
Encoding the dynamic model \Fdyn as optimization constraints is a common technique in Model Predictive Control.
Encoding the controller \Fctrl can be achieved with a program analysis tool to convert each if-branch of control laws
into equality constraints between \err and controller output $\ctrlout=\Fctrl(\err)$.
An example template for Gurobi solver~\cite{gurobi} is shown in \Fmindist.

We argue \Fmain computes a function $\Ftol_i$ that returns a safe neighborhood for any ground truth percept $\Fmm(\state)$.
\begin{proposition}
\label{lem:safe-neighbor-set}
For each $\state \in \States_i$, $\Ftol_i(\Fmm(\state))$ computed by \Fmain is disjoint with the unsafe percepts, i.e,
\[
\forall\state\in\States_i.\Ftol_i(\Fmm(\state))\cap\{\err \mid \exists \state\in\States_i. \Fdyn(\state,\Fctrl(\err))\notin\Inv\}=\emptyset
\]
\end{proposition}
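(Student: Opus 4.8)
\emph{Proof plan.} I would prove the disjointness by a direct distance argument anchored on the optimality property of the radius $\ri$ returned by \Fmindist, splitting on the solver's return status. First I would dispatch the easy branch: if the solver reports INFEASIBLE, then the constraint set $\{\state\in\States_i,\ \state'=\Fdyn(\state,\Fctrl(\err)),\ \state'\notin\Inv\}$ is unsatisfiable, which says precisely that no percept $\err$ makes $\Fdyn(\state,\Fctrl(\err))\notin\Inv$ for any $\state\in\States_i$; hence the unsafe-percept set $\{\err \mid \exists \state\in\States_i.\ \Fdyn(\state,\Fctrl(\err))\notin\Inv\}$ is empty and is trivially disjoint from every ball $\Ftol_i(\Fmm(\state))$, consistently with \Fmindist returning $\ri=+\infty$ in this branch.

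For the OPTIMAL/SUBOPTIMAL branch, the key step is to establish that the value $\ri=\hat r-b$ set on Line~\ref{line:solve-opt} is a certified \emph{strict} lower bound on the true optimum $r^*$ of the program, i.e.\ $\ri<r^*$: the solver returns an objective value $\hat r$ together with a nonnegative bound $b$ on the optimality gap such that $\hat r-b$ is a sound lower bound on $r^*$, and the ``just below $r^*$'' design choice makes the inequality strict and absorbs floating-point error. Granting this, recall that $\Ftol_i(\Fmm(\state))$ is the closed $\ell_2$-ball of radius $\ri$ about $\Ai\times\Fmm(\state)+\bi$. Fix $\state\in\States_i$ and $\err\in\Ftol_i(\Fmm(\state))$, so $\lVert \err-(\Ai\times\Fmm(\state)+\bi)\rVert\le\ri<r^*$, and suppose toward a contradiction that $\err$ is unsafe, witnessed by the very state indexing the ball, i.e.\ $\Fdyn(\state,\Fctrl(\err))\notin\Inv$. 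Then $(\state,\err,\Fdyn(\state,\Fctrl(\err)))$ is feasible for the program defining $r^*$, whence $r^*\le\lVert \err-(\Ai\times\Fmm(\state)+\bi)\rVert<r^*$, a contradiction. Therefore no point of $\Ftol_i(\Fmm(\state))$ is unsafe, and disjointness follows; this ``diagonal'' form is precisely what is consumed when discharging Definition~\ref{def:safety-dir} in Section~\ref{sec:soundness}.

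I expect the main obstacle to be the reasoning about the numerical optimizer rather than the geometry. Concretely, in the SUBOPTIMAL case one must argue carefully that the quantity carried into Line~\ref{line:solve-opt} is a genuine lower bound on $r^*$ and not the incumbent objective value (which would be an \emph{upper} bound on $r^*$ and hence unsound for safety), and that $\hat r-b$ remains $\le r^*$ and, after the slack subtraction, strictly below it; this is exactly why Algorithm~\ref{alg:min-dist} distinguishes the statuses and forms $\hat r-b$ rather than using the raw optimum. A secondary loose end is the well-definedness of $r^*$ as an infimum: under mild regularity (e.g.\ $\Inv$ closed and $\Fdyn\circ\Fctrl$ continuous, so the unsafe-percept region is open) $r^*$ is a well-defined extended real, and choosing $\ri<r^*$ keeps the argument valid whether or not that infimum is attained.
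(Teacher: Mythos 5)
Your proposal is correct and takes essentially the same route as the paper's proof: a case split on the solver status, with INFEASIBLE yielding an empty unsafe-percept set and OPTIMAL/SUBOPTIMAL yielding $\ri=\hat r-b<r^*$, from which disjointness of the radius-$\ri$ ball follows. Your explicit feasibility-based contradiction and your remark that what you actually establish (and what Proposition~\ref{lem:safe-neighbor-lem} consumes) is the ``diagonal'' form, with the unsafe witness taken to be the state indexing the ball, merely spell out steps that the paper's own one-line conclusion leaves implicit.
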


\begin{proof}
Our proof is to analyze the possible outcome status from the optimization solver,
and propagate each outcome through our functions.
At Line~\ref{line:solver-outcome}, the solver may return the following status:
\begin{enumerate}
    \item When \textit{status=OPTIMAL} or \textit{status=SUBOPTIMAL}, the solver returns a distance $\hat{r}$ and a bound $b$ such that the true minimum $r^*$ is within the bound,
    i.e., $\hat{r}\geq r^*$ and $\hat{r} - r^* < b$
    Modern solvers all provide the bound to address numerical error or sub-optimal solutions.
    Consequently, $\ri = \hat{r} - b$ at Line~\ref{line:solve-opt} ensures $\ri < r^*$, and hence the ball with the radius \ri is disjoint with the unsafe set.
    \item When \textit{status=INFEASIBLE},the constraints are unsatisfiable,
          i.e., the unsafe set $\{\err \mid \exists \state\in\States_i. \Fdyn(\state,\Fctrl(\err))\notin\Inv\}$ is proven to be $\emptyset$.
          We let $\ri=+\infty$ and thus $\Ftol_i$ is equivalent to the whole space of percepts $\Seterr$.
\end{enumerate}
\end{proof}

\subsection{Verifying  with abstraction: Theory \& code}
\label{sec:soundness}
In this subsection, we  summarize the claim that the abstraction $\Fapprox$ computed by \Fmain indeed assures safety of the abstract system $\ApproxSys(M)$ and show how it can be used for code-level verification. 
At a mathematical-level, the safety of $\Fapprox$ follows essentially from the construction in \Fmain.
Using Proposition~\ref{lem:safe-neighbor-set}, we can show that the constructed abstraction \Fapprox preserves the invariant \Inv.
\begin{proposition}
\label{lem:safe-neighbor-lem}
    If every function $\Ftol_i:\Seterr\rightarrow 2^\Seterr$ returns the safe neighborhood of $\States_i$ for all $i$,
    then the abstraction \Fapprox preserves the invariant \Inv.
\end{proposition}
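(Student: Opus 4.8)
The plan is to unfold Definition~\ref{def:safety-dir} and discharge it directly by a case split on which cell of the partition contains the current state, invoking Proposition~\ref{lem:safe-neighbor-set} (equivalently, the hypothesis that each $\Ftol_i$ returns the safe neighborhood of $\States_i$) on that cell. No properties of $\Fdyn$, $\Fctrl$, $\Fmm$, or of the regression/optimization steps are needed beyond what is already packaged in Proposition~\ref{lem:safe-neighbor-set}.

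First I would fix an arbitrary environment $\envparams \in \Envparams$, an arbitrary state $\state \in \Inv$, and an arbitrary percept $\err \in \Fapprox(\state,\envparams)$; the goal is to show $\Fdyn(\state,\Fctrl(\err)) \in \Inv$. Since $\Inv = \bigcup_{i=1}^N \States_i$, there is an index $i$ with $\state \in \States_i$, and by the piece-wise definition of $\Fapprox$ we have $\Fapprox(\state,\envparams) = \Ftol_i(\Fmm(\state))$, hence $\err \in \Ftol_i(\Fmm(\state))$.

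Second I would apply Proposition~\ref{lem:safe-neighbor-set} to the cell $\States_i$: the set $\Ftol_i(\Fmm(\state))$ is disjoint from the unsafe percepts $\{\err' \mid \exists\, \tilde{\state}\in\States_i.\ \Fdyn(\tilde{\state},\Fctrl(\err'))\notin\Inv\}$. Therefore $\err$ does not lie in that set, i.e. there is \emph{no} $\tilde{\state} \in \States_i$ with $\Fdyn(\tilde{\state},\Fctrl(\err)) \notin \Inv$. Specializing this to $\tilde{\state} := \state$ (legitimate because $\state \in \States_i$) yields $\Fdyn(\state,\Fctrl(\err)) \in \Inv$, which is exactly the required conclusion. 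As $\envparams$, $\state$, and $\err$ were arbitrary, this establishes the universally quantified statement of Definition~\ref{def:safety-dir}, so $\Fapprox$ preserves $\Inv$.

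The only step needing any care — and the one I would flag as the ``hard part,'' though it remains elementary — is the quantifier manipulation in the second step: the unsafe-percept set is defined with an existential over \emph{all} of $\States_i$, so the disjointness given by Proposition~\ref{lem:safe-neighbor-set} is a priori stronger than what is needed, and the argument must instantiate this ``$\err$ avoids the whole existential set'' fact at the particular $\state$ in hand. It is also worth noting that the proof does not use pairwise disjointness of the $\States_i$: any $i$ with $\state \in \States_i$ suffices, so the claim is robust to whether $\{\States_i\}$ is a strict partition or merely a cover of $\Inv$.
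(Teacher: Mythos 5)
Your proposal is correct and follows essentially the same route as the paper's own proof: unfold Definition~\ref{def:safety-dir}, locate the cell $\States_i$ containing $\state$ (using that the $\States_i$ cover $\Inv$), and convert the disjointness from Proposition~\ref{lem:safe-neighbor-set} into $\forall\err\in\Ftol_i(\Fmm(\state)).\ \Fdyn(\state,\Fctrl(\err))\in\Inv$. Your explicit instantiation of the existential quantifier at the particular state is just a more careful rendering of the paper's ``rewrite'' step, so there is no substantive difference.
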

\begin{proof}
Let us fix $\state \in \States_i$ and the corresponding ground truth percept $\Fmm(\state)$,
and $\Ftol_i(\Fmm(\state))$ represents all percepts allowed by $\Ftol_i$
Using the $\Ftol_i$ computed by \Fmain ,
we have shown in Proposition~\ref{lem:safe-neighbor-set} that $\Ftol_i(\Fmm(\state))$ does not intersect with any percept that can cause the next state $\Fdyn(\state,\Fctrl(\err))$ to leave \Inv.
We then rewrite it as, for each $\state \in \States_i$, any percept $\err\in\Ftol_i(\Fmm(\state))$ preserves the invariant \Inv, i.e,
\begin{equation}\label{eq:safe-subset}
\forall\state\in\States_i.\forall\err\in\Ftol_i(\Fmm(\state)).\Fdyn(\state,\Fctrl(\err))\in\Inv.    
\end{equation}
Therefore, the invariant \Inv is preserved for each subset $\States_i$,
The proof of Proposition~\ref{lem:safe-neighbor-lem} is then to expand Definition~\ref{def:safety-dir} with the function body of \Fapprox
and extend the guarantee from Equation~\eqref{eq:safe-subset} to all $\state\in\Inv$ simply because $\{\States_i\}_{i=1\dots N}$ covers \Inv.
\end{proof}

\begin{comment}
\begin{itemize}
\item More templates may be derived from the semantics of the programs or the sensing.
For example, there is symmetry in the behavior of the vehicle that the partition where both heading and distance are positive is symmetric to the partition where both heading and distance are negative.
If the controller and dynamics are translation invariant. The learned approximation should also preserve the symmetry, and this may be enforced by the template.
\end{itemize}
\end{comment}

\begin{figure}[t]
    \centering
    \includegraphics[width=\linewidth]{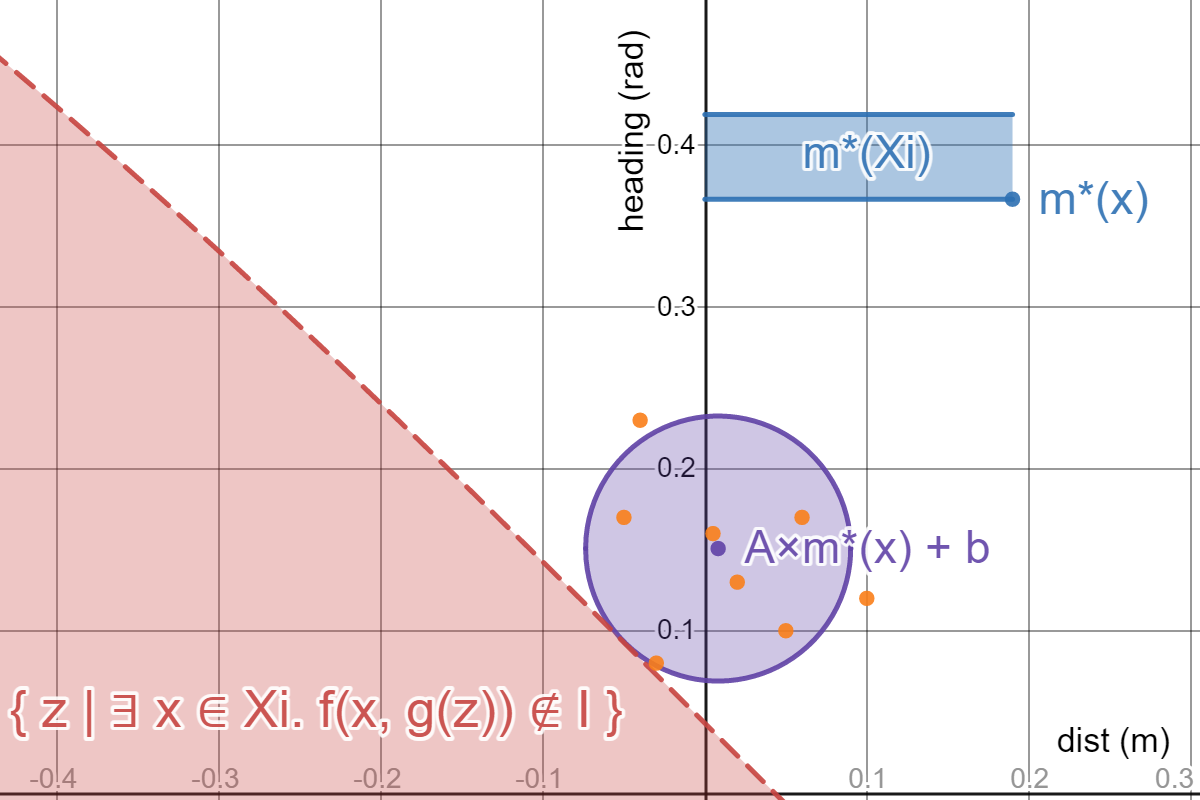}
    \caption{Example safe neighbor function $\Ftol_i$ inferred from linear regression and constrained optimization.}
    \label{fig:ex-tolerable}
\end{figure}

\begin{figure}[t]
    \centering
    \includegraphics[width=\linewidth]{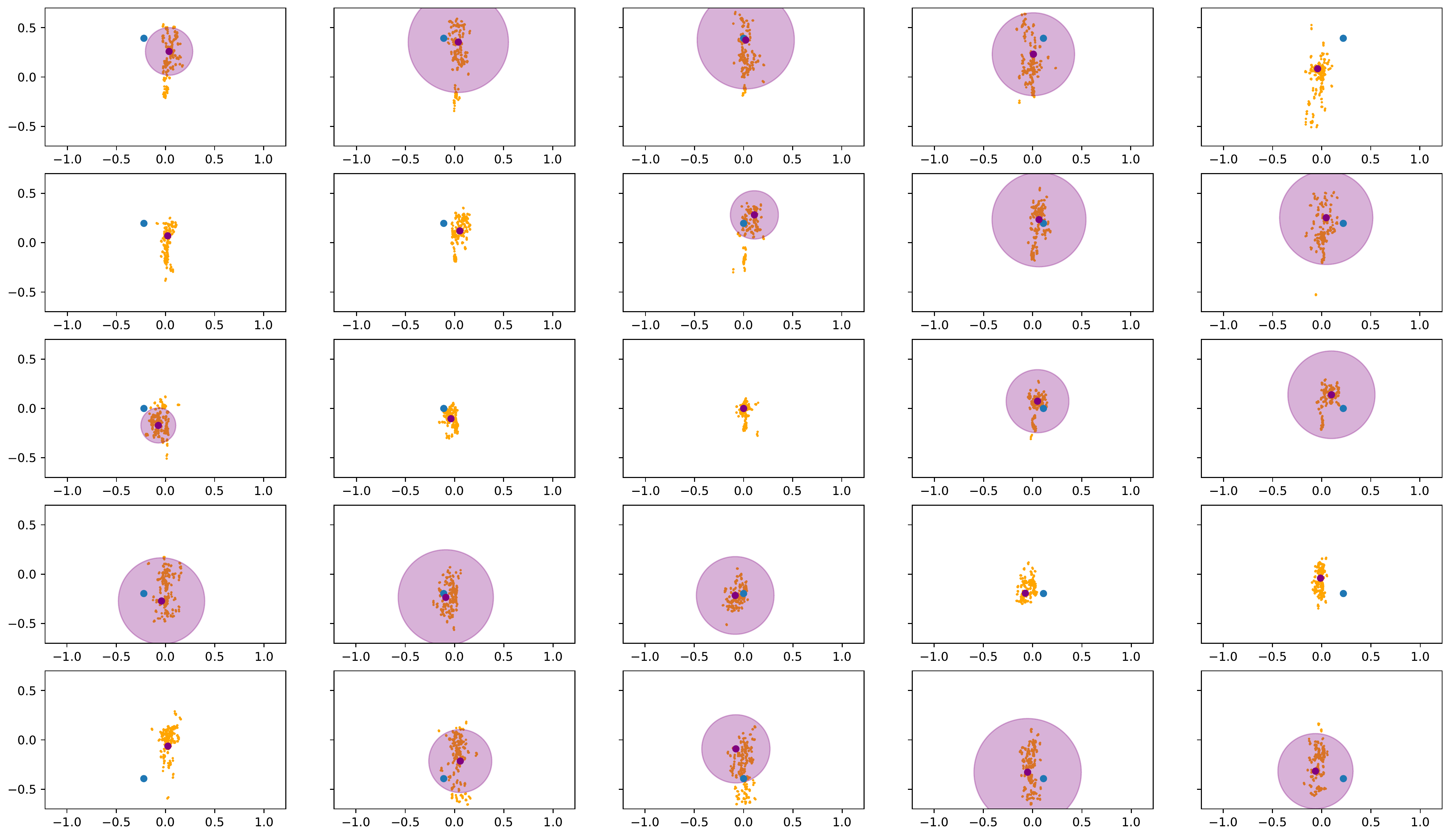}
    \caption{Ground truth (blue dot), perceived values (orange points), and inferred safe neighborhood (purple circle). Notice the biases in different parts of the state space: the mean of the perceived values do not align with the ground truth.}
    \label{fig:agbot5x5}
\end{figure}

More importantly, the constructed abstraction $\Fapprox$ can be plugged into the models of the system $\Sys$, with different levels of detail, and verified using any number of powerful formal verification tools that have been developed over the past decades. 
For example, the abstract perception system could be plugged into the  controller $\Fctrl$ and dynamics $\Fdyn$ functions represented by complex, explicit models, code, and differential equations, and we can verify the resulting system rigorously. 

To illustrate this point, in this paper, we showcase how to use $\Fapprox$ with C code implementations of $\Fctrl$ and $\Fdyn$ and verify the resulting system with CBMC~\cite{CBMC} to gain a high-level of assurance for the control system. 
Recall our set valued abstraction is of the form:
\[
\Fapprox(\state,\envparams) = \begin{cases}
\Ftol_1(\Fmm(\state)), &\text{iff} \ \state\in\States_1 \\
\qquad \vdots \\
\Ftol_N(\Fmm(\state)), &\text{iff}\ \state\in\States_N
\end{cases}
\]
This can be directly translated into program \emph{contracts}, that is, preconditions and postconditions, supported by numerous existing program analysis tools~\cite{CBMC,codeContract,javaml,specsharp}.
For instance, we are able to implement the abstraction \Fapprox shown as C code in the following template with CBMC's APIs.
\begin{lstlisting}[language=C,mathescape=true,basicstyle=\footnotesize]
$\Seterr$ $\Fapprox$($\States$ $\state$, $\Envparams$ $\envparams$){
    __CPROVER_requires($\bigvee\limits_{i=1}^N\state\in\States_i$);
    $\Seterr$ $\err$ = nondet_$\err$();
    __CPROVER_ensures($\bigwedge\limits_{i=1}^N\state\in\States_i\rightarrow\err\in\Ftol_i(\Fmm(\state))$);
}
$\Setctrlout$ $\Fctrl$($\Seterr$ $\err$) { // Stanley controller example code
  $\Setctrlout$ $\delta$ = $\err.\psi$ + atan2($K$*$\err.d$, $\vf$);
  if($\delta$ >= $\delta_{max}$)
     $\delta$ = $\delta_{max}$;
  else if($\delta$ <= $-\delta_{max}$)
     $\delta$ = $-\delta_{max}$;
  return $\delta$;
}
$\States$ $\Fdyn$($\States$ $\state$, $\Setctrlout$ $\delta$) { // Bicycle model example code
  $\States$ new_$\state$;
  new_$\state.x$ = $\state.x$ + $\vf$*cos($\state.\theta$+$\delta$)*$\dT$;
  new_$\state.y$ = $\state.y$ + $\vf$*sin($\state.\theta$+$\delta$)*$\dT$;
  new_$\state.\theta$ = $\state.\theta$ + $\vf$*sin($\delta$)/$L$*$\dT$;
  return new_$\state$;
}
\end{lstlisting}
We then are able to verify the whole system integrating the controller and the dynamics such as the example code above with CBMC.
A prominent example in the above code is that, in C, $\arctan$ with division in its input expression is often implemented with \texttt{atan2} instead to handle zero denominators correctly,
and it becomes obscure if the proof for mathematical models will still hold in the code level,
It is therefore crucial to use CBMC to automatically check safety requirements still holds.

\subsection{Precision of abstraction}
\label{subsec:precision}

How close is the computed abstraction $\Fapprox$ to the actual perception system $\Fcam \circ \Fnn$?
As we discussed earlier, it is difficult if not impossible to rigorously answer this question because the perception system (and therefore the learning stage of $\Fapprox$) depends on the $\envparams$ in complex and unknown ways. 
Also a simple answer to this question is unlikely to be satisfactory. We might care  more about precise in certain parts of the state space and for certain environmental conditions, than others.
Trying to argue which environmental conditions are more likely to arise in the real world, may be complicated. 

We propose a simple and fine-grained empirical measure of precision. 
We fix a range of environmental parameter values $\Envparams$. 
For each partition $\States_i$, we collect a \emph{testing set} of pairs of $(\err^*, \err)$ by sampling across $\States_i \times \Envparams$ using {\em some distribution $\mathcal{D}$}, where as before $\err=\Fnn\circ\Fcam(\state,\envparams)$ is the actual perception output
and $\err^* = \Fmm(\state,\envparams)$ is the ground truth. 
We denote a pair $(\err^*, \err)$ that satisfies $\err \in \Ftol_i(\err^*)$ as a positive pair.
Then, the fraction of positive pairs gives us the empirical probability with respect to $\mathcal{D}$ that  the actual  perception system (with DNN) outputs percepts that are proved to be safe in $\ApproxSys(\Fapprox)$ with respect to the invariant. 
%counting the positive pairs over all collected pairs.
It may be tempting to interpret this probability as a probability of system-level safety, but without additional information how $\mathcal{D}$ is related to the actual  distributions over $\States$ and $\Envparams$, we cannot make such conclusions. 

In experiments discussed in the following sections, we use $\mathcal{D}$ to be the uniform distribution. Each of the heatmaps shown in Figures~\ref{fig:heatmap-gem-stanley} illustrate the precision of  different safe abstractions over $\States_i$. A darker green $\States_i$ means that a higher fraction of outputs from the perception system matches the provably safe abstraction $\Fapprox$.

\begin{comment}

\begin{itemize}
\item If $\{\States_i\}_{0\leq i < N}$ is a \emph{partition} of \States, we can calculate the overall score for
the whole state space $\States$ by the sum of the positive pairs over the sum of all pairs.
\item If the subsets may overlap, we only have to avoid counting repeating states.
The value of $(\err^*, \err)$ derived from a given state \state should not be affected by the selected subspace.
We consider the pair to be positive if it is positive for any of \Ftol for $\States_i$.
The overall score for the whole state space $\States$ is still calculated by the sum of the positive pairs over the sum of all pairs.
\end{itemize}
{\color{red}Remaining is to select a sampling strategy and bound the actual vs estimated probability that Definition~\ref{def:tolerable} holds. E.g., Hoeffding's inequality, Chernoff bound.}
\end{comment}

\section{Case study~1: Vision-based Lane Keeping with LaneNet}\label{sec:case-lanenet}

Recall our motivating example in Section~\ref{sec:ex1},
we study the Polaris GEM e2 Electric Vehicle and its high-fidelity Gazebo simulation~\cite{du_safe_pedestrian_2020}.
The perception module uses LaneNet~\cite{nevan_lanenet_2018} for lane detection.%
\footnote{We use \url{https://github.com/MaybeShewill-CV/lanenet-lane-detection}, one of the most popular open source implementation of LaneNet on GitHub.}
In this section, we first discuss the construction of the safe abstraction \Fapprox in Section~\ref{subsec:gem-synthesis}.
In Section~\ref{subsec:gem-heatmap}, we discuss the interpretation of the precision  \emph{heatmaps}. 
We aim study the impact of the following three factors on the precision of abstractions:
\begin{enumerate}[RQ 1.]
\item Selection of partitions $\{\States_i\}$. \label{rq:part}
\item Environment parameter distributions $\mathcal{D}$. \label{rq:env}
\item Abstractions for different invariant $\Inv$ requirements.\label{rq:prop}
\end{enumerate}

\subsection{Details on construction of abstraction}\label{subsec:gem-synthesis}
\begin{table}[t]
\caption{Definitions of tracking error functions.}\label{table:lyapunov}
\begin{tabular}{l|l}
    Tracking error function                                            & Description \\ \hline
    \multirow{2}{*}{$\lyapunov_1(d, \psi) =
                   \lVert \psi + \arctan(\frac{K\cdot d}{\vf})\rVert$} & Combined heading and \\
                                                                       & distance error in~\cite{hoffmann_stanley_2007} \\
    $\lyapunov_2(d, \psi) = \lVert d \rVert $                          & Distance error only \\
    $\lyapunov_3(d, \psi) = \lVert (d, \psi) \rVert $                  & Vector norm as error \\
\end{tabular}
\end{table}

$\States_0 = \left\{(x,y,\theta) \mid x=0 \land |y| \leq 1.2 \land |\theta| \leq \frac{\pi}{12}\right\}$ is the initial set of states,
and we recall the unsafe set is  $\Unsafe = \{(x,y,\theta) \mid |y| > 2.0\}$.
Next, we discuss about the invariant \Inv we will use to prove $\Unsafe$.
A standard induction-based proof for control systems is to define an error function (Lyapunov function) over the perceived values,
and then prove that the error is non-increasing by induction.
Formally, a tracking error function is
$\lyapunov: \Seterr \mapsto \mathbb{R}_{\geq 0}$, with $\lyapunov(\mathbf{0}_\err) = 0$ and $\lyapunov(\err) > 0$ when $\state \neq \mathbf{0}_\err$.
where $\mathbf{0}_\err \in \Seterr$ is the equilibrium.
The different  error functions used in this paper are shown in Table~\ref{table:lyapunov}.

Take $\lyapunov_1$ in Table~\ref{table:lyapunov} as an example,
the ideal perceptual output of a state $(x,y,\theta)$ is obtained by $(d^*, \psi^*) = \Fmm(x, y, \theta)$, and the tracking error is then $\lyapunov_1(d^*, \psi^*)$.
This function $\Fmm$ may in general be complicated and dependent on the geometry of the lanes.
The next state according to Equation~\eqref{eq:closed-system}
is $(x',y',\theta')=\Fdyn((x,y,\theta),\Fctrl(\Fmm(x,y,\theta)))$.
The next percept is then obtained by $({d^*}', {\psi^*}') = \Fmm(x', y', \theta')$.
We then define the invariant of non-increasing error $\Inv_1\subseteq\States$ as:
\[
\Inv_1 = \{\state \mid \lyapunov_1({d^*}', {\psi^*}') \leq \lyapunov_1(d^*, \psi^*)\}
\]
We give the detail descriptions and values of constant symbols used in \Fdyn and \Fctrl as well as the perfect estimation $\Fmm$ in Appendix~\ref{appx:stanley}.

To infer the safety abstraction $\Fapprox$, we consider the invariant is covered by partitions
$\{\States_i\}_{i\leq N}$ with $y$ within $\pm 1.2$ meters and heading angle $\theta$ within $\pm 15^\circ$,
that is,
\[
\bigcup_{i=1}^N\States_i = \left\{(x,y,\theta) \mid |y|\leq 1.2 \land |\theta|\leq\frac{\pi}{12}\right\}
\]
Further, we consider three different partitions $N\in\{8\times5, 8\times10, 8\times20\}$; larger numbers partition more finely and produce refinements of the coarser abstractions.
Here we do not partition along $x$ for (a) better visualization and (b) because lanes are aligned with the $x$-axis,
partitioning $x$-axis does not produce interesting results.

To prepare the training data for learning \Ai and \bi to construct $\Ftol_i$,
we use the Gazebo simulator in~\cite{du_safe_pedestrian_2020} to generate camera images \image
labeled with their ground truth percepts $\err^*$.
Each image is sampled from an uniform distribution $\mathcal{D}$ over the subspace $\States_i$
as well as the environment space \Envparams.
The environment parameter space \Envparams is defined by:
\begin{inparaenum}[(i)]
\item three types of straight roads, two-lane, four-lane, and six-lane,
\item two different lighting conditions, day and dawn.
\end{inparaenum}
The ground truth percept $\err^*=\Fmm(\state)$ is then calculated using information from simulator,
and we ensure that at least 300 images are collected for each $\States_i$ to learn \Ai and \bi for $\Ftol_i$.

For each partition, given \Ai and \bi learned from multivariate linear regression using the data.
\Fmindist, implemented in Gurobi~\cite{gurobi}, solves the following 
 nonlinear optimization problem to find \ri:
Each $\States_i$ covers an interval of 0.3 meter for $y$ and $3^\circ$ for $\theta$.
    We discuss the optimization problem for a particular subset $\States_i$ that covers $y$ from 0.9 to 1.2 meters and $\theta$ from $12^\circ$ to $15^\circ$ as an example,   i.e, $\States_i=\left\{(x,y,z)\mid y\in[0.9,1.2] \land \theta\in\left[\frac{\pi}{15}, \frac{\pi}{12}\right]\right\}$.
    
    \begin{align*}
    \min \quad & \lVert (d,\psi) - (\Ai\times\Fmm(x,y,\theta) + \bi) \rVert \\
    s.t. \quad & y \in [0.9,1.2], \theta\in\left[\frac{\pi}{15}, \frac{\pi}{12}\right] \\
    & \delta = \Fctrl(d,\psi) \\
    & x' = x + \vf\cdot\cos(\theta + \delta)\cdot\dT \\
    & y' = y + \vf\cdot\sin(\theta + \delta)\cdot\dT \\
    & \theta' = \theta + \vf\cdot\frac{\sin(\delta)}{L}\cdot\dT \\
    & \lyapunov_1(\Fmm(x',y',\theta')) > \lyapunov_1(\Fmm(x,y,\theta))
    \end{align*}
All the computed abstractions were composed with the code for the controller $\Fctrl$ and the dynamics $\Fdyn$ and verified for the corresponding invariant with CBMC.

\subsection{Interpretation of precision of abstractions}
\label{subsec:gem-heatmap}

Figure~\ref{fig:heatmap-gem-stanley} shows the precision maps for   three  abstractions resulting from three increasingly finer partitions and two  sets of testing environments. First, we discuss the broad trends and then delve into the details. 

\paragraph{At equilibrium, abstraction breaks but it does not matter}
All six heatmaps demonstrate a common trend where there is  a band of white (low score) cells going from the second to the fourth quadrant.
There are areas where the safe radius $\ri$ of $\Ftol_i$ is too small be an abstraction of the perception system. 
This phenomenon can be informally understood as follows:
First, the center (equilibrium) of the plot corresponds to near zero error in deviation $d$ and heading $\psi$. 
Consider when a vehicle state with the tracking error approach 0,
the percept must also approach the ground truth so that the next state would maintain the error does not increase.
Consequently, the safe radius $\ri\to 0$.
Recall that \ri is minimized with respect to all state $\state\in\Seterr_i$.
We can view the overall system $\ApproxSys(\Fapprox)$ as a fixed-resolution quantized control system. It is well-known that such a system cannot achieve perfect asymptotic stability~\cite{brockett2000quantized}. The feedback does not have enough resolution to drive the state to the equilibrium, but instead, it can converge to some neighborhood around it. This explains why the error function $V$ cannot be non-increasing around the origin. In other words, here the abstraction is ``failing to be safe'' because of the control. We note that not being able to prove safety around the origin is less of a problem because these are precisely the states where the vehicle is centered between the lanes and its heading is aligned.

\paragraph{Weak invariants can break safe abstraction}
Secondly, along the diagonal line (through the origin) we have states where the vehicle's deviation from the lane center $d$ and the heading $\psi$ are in opposing direction. 
By observing $\lyapunov_1$ from Table~\ref{table:lyapunov},
we know $\psi$ and $d$ are of opposite signs at the equilibrium points $\lyapunov_1(d, \psi)$.
Hence the band of white cells goes from the second to the fourth quadrant.
Therefore, the tracking error cannot be non-increasing in these states in one step as required by $\Inv$. These regions of the precision map are white because, as mentioned, the safe radius $\ri$ is too small to be an abstraction of $\Ftol$. In these regions, the abstraction is failing because the invariant $\Inv$ property we are trying to prove is too weak to be proven in one step. A remedy for this problem will be to come up with stronger inductive invariants for the system with perfect perception.  

\begin{figure}[t]
\setkeys{Gin}{width=0.32\linewidth,clip,trim={2cm 0cm 2.2cm 1cm}}
\includegraphics{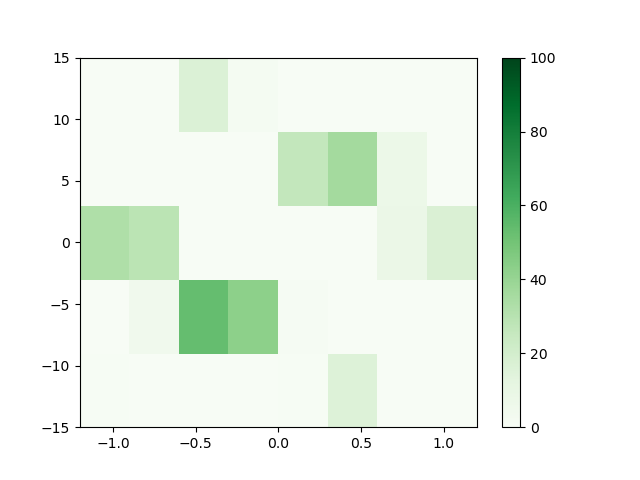}
\includegraphics{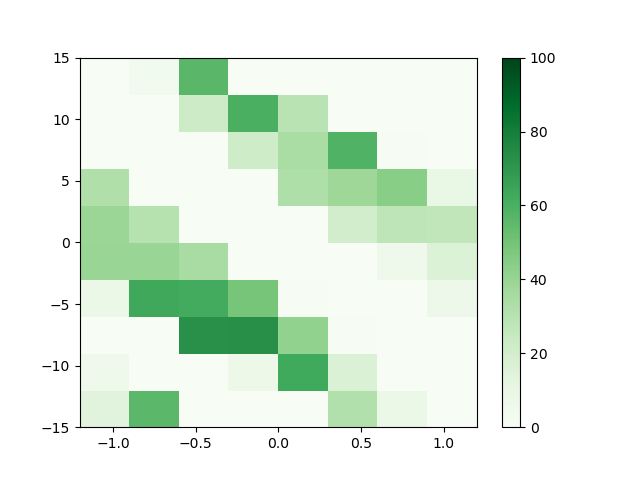}
\includegraphics{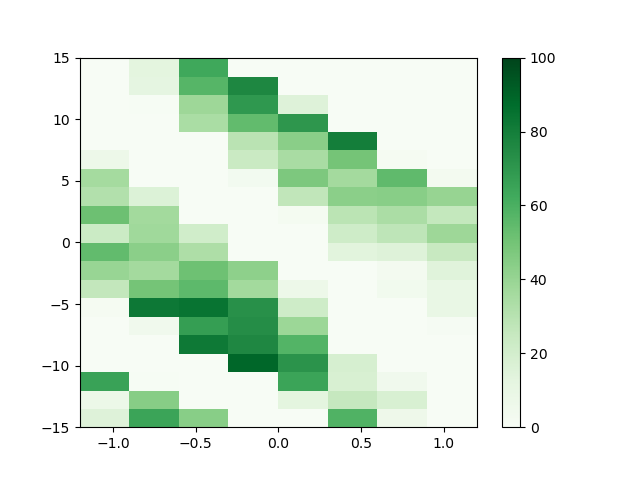}

\includegraphics{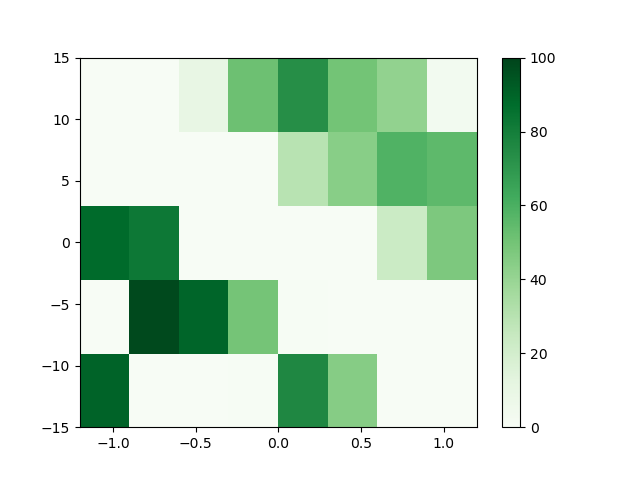}
\includegraphics{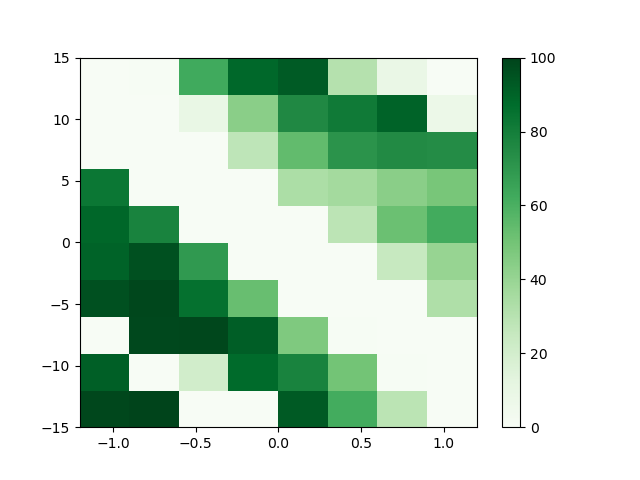}
\includegraphics{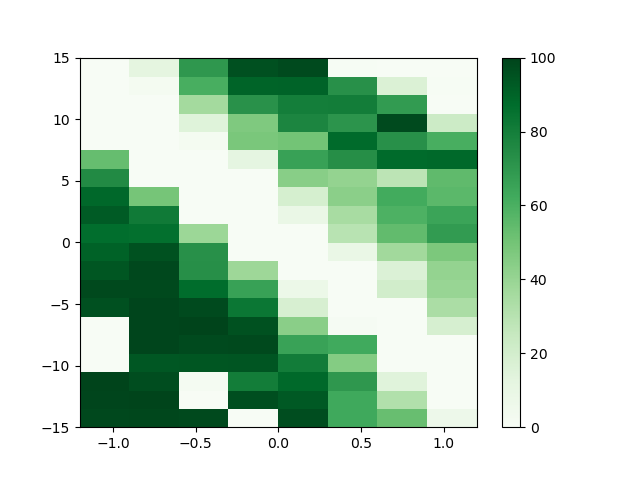}
\caption{Precision heatmaps of abstractions inferred for LaneNet with Stanley controller.
    The partitions with $N=8\times5$~(\emph{Left column}), $N=8\times10$~(\emph{Mid column}), and $N=8\times20$~(\emph{Right column}).
    The environment parameter space with three road types~(\emph{Top row}) and only two-lane road (\emph{Bottom row}).
}\label{fig:heatmap-gem-stanley}
\end{figure}

\paragraph{Finer partitions improve precision of safe abstractions}
We observe from each row of heatmaps in Figure~\ref{fig:heatmap-gem-stanley} that finer partitions generate more precise abstractions.
In the finest partition, several cells achieve over 90 percent.
The reasons are twofold.
\begin{inparaenum}
\item With a finer partition, linear regression can better fit a smaller interval of the original nonlinear perception.
\item The safe radius \ri is minimized for all states $\state\in\States_i$.
      If a smaller subset $\States_j\subset\States_i$ excludes the worst state,
      the radius $r_j$ can improve and cover larger safe neighborhood.
\end{inparaenum}

\paragraph{Fewer environmental variations improve precision}
For RQ~\ref{rq:env}, we observe each column of heatmaps in Figure~\ref{fig:heatmap-gem-stanley}.
We generated two testing sets under different distributions over the environment space including
 \begin{inparaenum}
 \item the same uniform distribution for the training set, and
 \item an uniform distribution over the subspace with only the two-lane road.
\end{inparaenum}
The colors become darker for the same cell locations as expected.
The variance in the perceived values by DNN reduces because of the fewer environmental variations.
The same radius now can cover more samples in the testing set.

\paragraph{Variations with different safety requirements}
Finally for RQ~\ref{rq:prop}, we consider another invariant which uses different tracking error functions $\lyapunov_2$ and $\lyapunov_3$ (listed in Table~\ref{table:lyapunov}).
$\lyapunov_2$ considers only lane deviation error ($d$), and $\lyapunov_3$ uses the vector norm as error.
Both can be used to prove the same unsafe set \Unsafe.
Three heatmaps for each tracking error function are shown in Figure~\ref{fig:heatmap-gem-cte} for the same three partitions and with the testing set with two-lane road.
By comparing Figures~\ref{fig:heatmap-gem-stanley} and~\ref{fig:heatmap-gem-cte},
we see a white band now surrounding the line $d=0$ for $\lyapunov_2$ and a white spot around the origin for $\lyapunov_3$.
This validates our explanation that the abstraction breaks owing to the stringent requirement of non-increasing error.

\begin{figure}[t]
\setkeys{Gin}{width=0.32\linewidth,clip,trim={2cm 0cm 2.2cm 1cm}}
\includegraphics{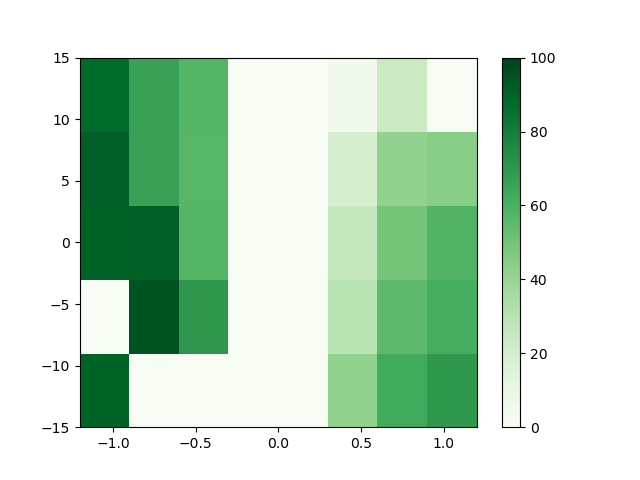}
\includegraphics{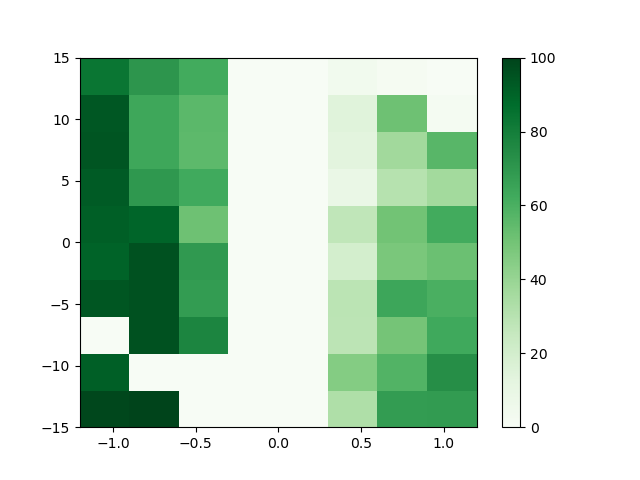}
\includegraphics{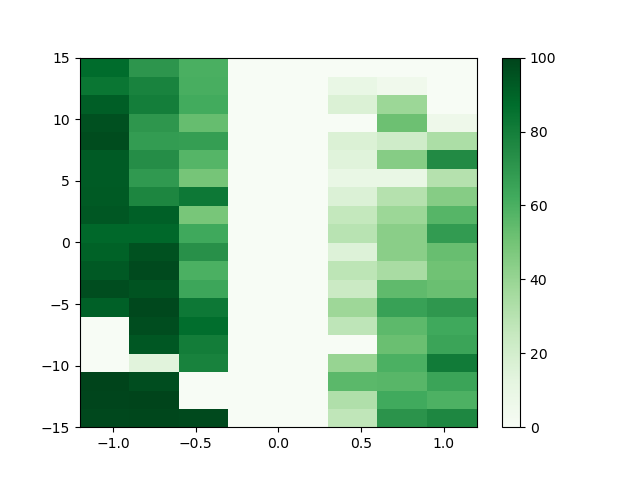}

\includegraphics{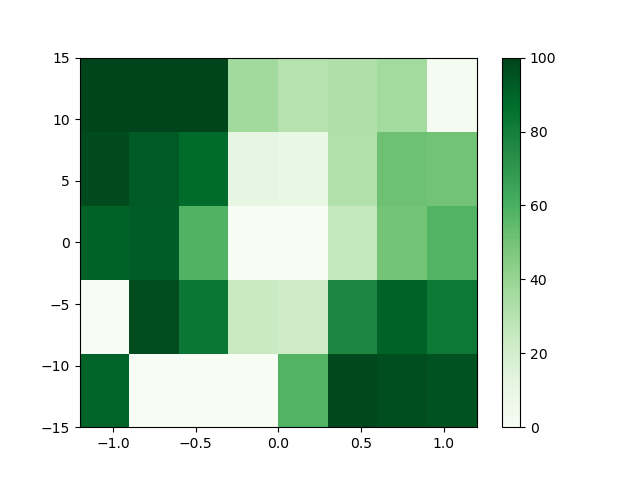}
\includegraphics{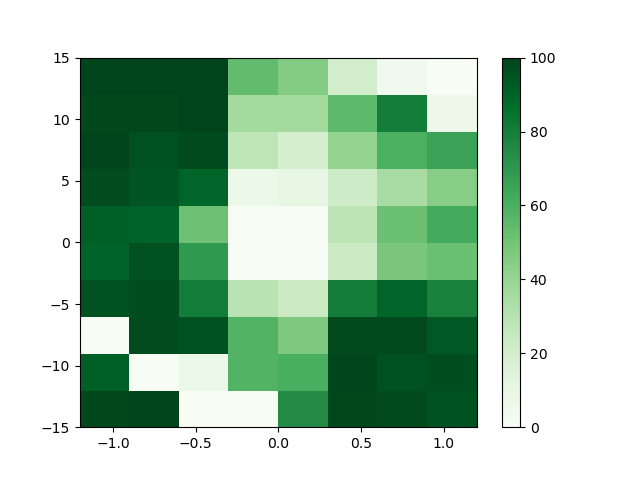}
\includegraphics{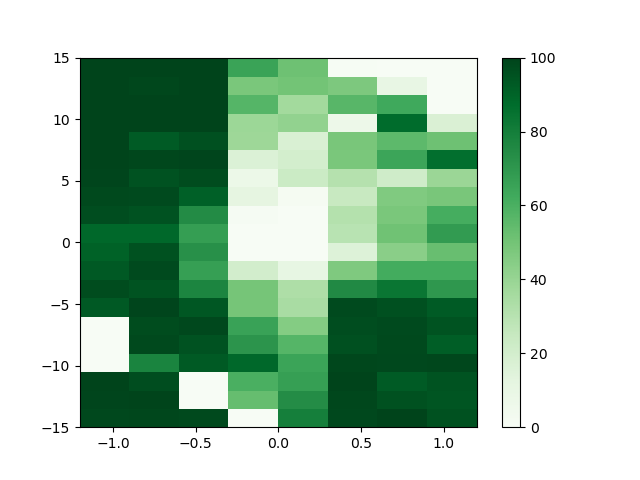}
\caption{Precision heatmaps for LaneNet with Stanley controller with only two-lane road and two error tracking functions $\lyapunov_2$~(\emph{Top Row}) vs $\lyapunov_3$~(\emph{Bottom Row}).}\label{fig:heatmap-gem-cte}
\end{figure}

\begin{comment}
\sayan{(1) Piece-wise Lyapunov function. 
Same system, different controller. (5) plot of all the ellipsoids in the all the partitions. (6) Figure with lanes. (7) Use ellipsoidal model/norm/covariance matrix to gain percentages.} 
\end{comment}

\section{Case Study~2: Corn Row Following Agbot}\label{sec:case-agbot}

\begin{figure}[ht]
    \centering
    \includegraphics[width=0.45\linewidth]{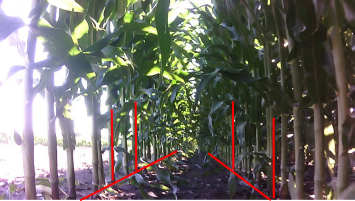}
    \includegraphics[width=0.45\linewidth,clip,trim= 0 100 0 20]{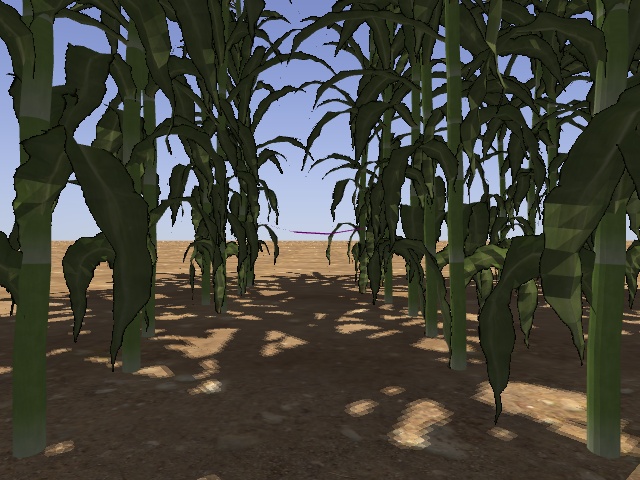}
    \caption{Real and simulated camera images used in corn row following for agricultural robots.}
    \label{fig:agbot-cam}
\end{figure}

Our second case study is the visual navigation system of under-canopy agricultural robot (AgBot), CropFollow, developed in~\cite{sivakumar_visual_agbot_2021}.
The system is responsible for the lateral control when the vehicle traverses the space between two rows of crops.
Similar to our first case study, the system captures the image in front of the vehicle with a camera (Figure~\ref{fig:agbot-cam}),
applies a ResNet-18 CNN on the camera image to perceive the relative positions of the corn rows to the ego vehicle,
and uses a modified Stanley controller to reduce the lateral deviation.

Here we give the detailed definition of each component.
In CropFollow~\cite{sivakumar_visual_agbot_2021}, the vehicle dynamics is approximated with a kinematic differential model of a skid-steering mobile robot.
The state \state consists of the 2D position $x$ and $y$ and the heading $\theta$.
The input $\ctrlout$ is the desired angular velocity $\omega$.
The dynamics $\Fdyn(\state, \ctrlout)$ is:
\begin{align*}
x_{t+1} &= x_t + \vf\cdot\cos(\theta_t)\cdot\dT \\
y_{t+1} &= y_t + \vf\cdot\sin(\theta_t)\cdot\dT \\
\theta_{t+1} &= \theta_t + \omega\cdot\dT
\end{align*}

Likewise, the modified Stanley controller takes a percept $\err \in \Seterr$ composed of the heading difference $\psi$ and cross track distance $d$
to an imaginary center line of two corn rows,
and outputs the angular velocity $\omega$.
The controller $\Fctrl$ is given as:
\[
\Fctrl(d, \psi) = \begin{cases}
    \frac{\psi + \arctan\left(\frac{K\cdot d}{\vf}\right)}{\dT}, & \text{if } \left|\psi + \arctan\left(\frac{K\cdot d}{\vf}\right)\right| < \omega_{max}\cdot\dT  \\
    \omega_{max},                         & \text{if } \psi + \arctan\left(\frac{K\cdot d}{\vf}\right) \geq \omega_{max}\cdot\dT  \\
    -\omega_{max},                        & \text{if } \psi + \arctan\left(\frac{K\cdot d}{\vf}\right) \leq -\omega_{max}\cdot\dT
\end{cases}
\]
For the farm robots, we wish to avoid two undesirable outcomes:
\begin{inparaenum}
\item if $|y| > 0.228$ meters, the vehicle will hit the corn, and
\item if $|\theta| > 30^\circ$, the neural network output becomes highly inaccurate and recovery may be impossible.
\end{inparaenum}
Therefore, we define the unsafe set $\Unsafe=\{(x,y,\theta) \mid |y| > 0.228 \land |\theta| > \frac{\pi}{6}\}$.
We use the same tracking error function $\lyapunov_1$ to specify the invariant \Inv with the above defined \Fdyn and \Fctrl.
Note that the values of constant symbols are tuned differently and provided in Appendix~\ref{appx:stanley-agbot}.

Similarly, we consider three different partitions $N\in\{5\times 5,10\times 10,20\times20\}$ to cover the invariant;
the whole space $\bigcup_{i=1}^N\States_i$ covers $\pm 0.228$ meters in $y$ and $\pm 30^\circ$ in $\theta$.
We follow the same procedure to sample images and derive the safe neighbor function $\Ftol_i$ for $\States_i$.
For this case study,
the environment parameter space \Envparams is defined by five different plant fields, including three stages of corn (baby, small, and adult) and two stages of tobacco (early and late) fields.
We use the uniform distribution over the state space $\States_i$ and the five environment parameters for both the training testing set. 

Figure~\ref{fig:heatmap-agbot} shows the precision heatmaps for the abstractions inferred with three different partitions.
We observe almost identical broad trends compared to Figure~\ref{fig:heatmap-gem-stanley},
including the white band around equilibrium,
the white spots in the upper right and lower left corners close to the violation of invariant,
and higher precision score with finer partitions.
This case study reaffirms the validity of our interpretation over the precision heatmap in Section~\ref{sec:case-lanenet}.
It also showcases that our analysis can be applied on vastly different vision and DNN based perception system with similar percept space.

\begin{figure}
\setkeys{Gin}{width=0.32\linewidth,clip,trim={2cm 0cm 2.2cm 1cm}}
\includegraphics{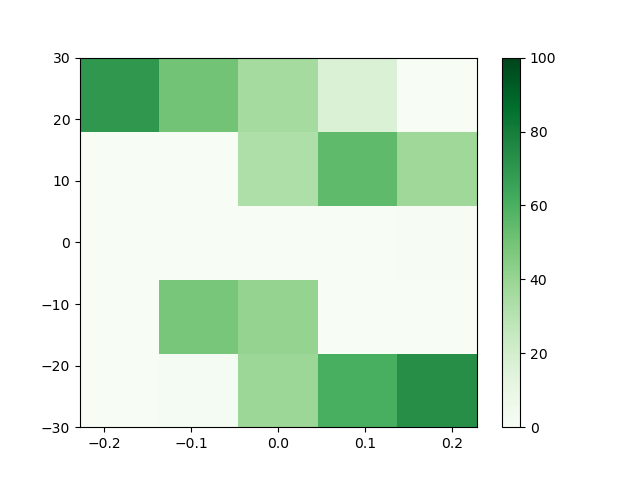}
\includegraphics{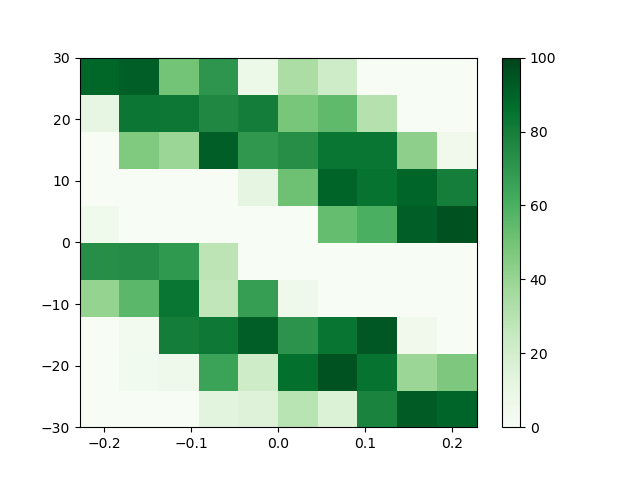}
\includegraphics{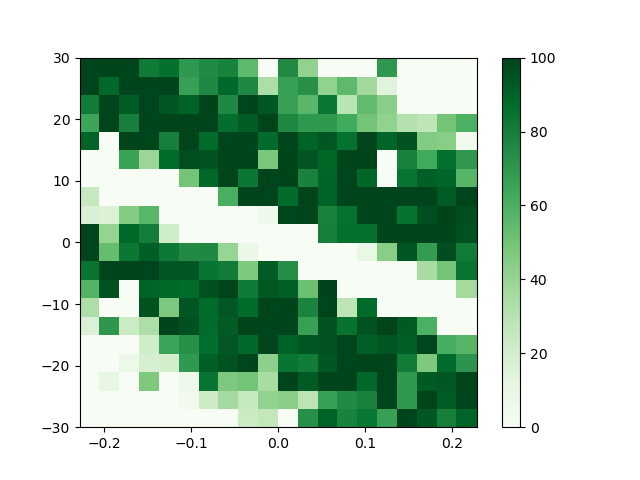}
\caption{Precision heatmap of abstractions inferred for CropFollow with modified Stanley controller.
The partitions with $N=5\times5$~(\emph{Left}), $N=10\times10$~(\emph{Mid}), and $N=20\times20$~(\emph{Right})}\label{fig:heatmap-agbot}
\end{figure}

\section{Discussion and Future Directions}
\label{sec:discussion}

Safety assurance of autonomous systems that use machine learning models for perception is an important challenge. We presented an approach for constructing abstractions for verifying control systems that use convolutional neural networks for perception. The approach learns  piece-wise affine set-valued abstractions of the perception system from data. It maximizes the these sets for improving precision, while assuring a given safety requirement. Viewing abstractions of perception systems along the triple axes of safety, intelligibility, and precision may be a productive perspective for tackling the problems of safety assurance of autonomous systems. We discuss some of the lessons learned and the future research directions they suggest.

Within the space of intelligible abstractions, we have explored one corner with piece-wise affine models. Needless to say that this was a somewhat arbitrary choice, and many other options should be explored, for example, with decision trees, polynomial models, space partitioning algorithms, etc.
Developing algorithms for computing such abstractions and as well as verifying the end-to-end abstract system $\ApproxSys(M)$ would be interesting directions for future research. 

Our piece-wise affine abstractions used uniform rectangular partitions. We observed that the size of the partitions have significant impact on the precision of the safe abstractions. The results suggest that non-uniform or adaptive partitioning (e.g., finer partitions nearer to the equilibrium) would yield more precise abstractions. Using domain knowledge and symmetries in creating the abstractions should  substantially improve their precision and  size. 

As expected, the safety requirement (or invariant) guiding the  construction of the abstraction significantly impacts the precision of the abstraction. The precision maps shed light on parts of the state space and environment where the DNN-based perception system is most fragile, likely to violate requirements. Such quantitative insights can inform design decisions for the perception system, the control system, as well as the definition of the system-level {\em operating design domains (ODDs)}.

Finally, we chose to use discrete time models and used CBMC for verifying the closed system with the abstraction. Extending the approach to continuous and hybrid would be interesting and would require nontrivial extensions of existing verification tools.
%More relaxed properties. Multi-step. Stronger inductive invariants. 

% \paragraph{Other avenues of exploring SIP}
% Better partitions and choice of function templates $\States_i$ using domain knowledge.

% \paragraph{Verification}
% CBMC vs DryVR Flow*

\bibliographystyle{ACM-Reference-Format}
\bibliography{references,xai}

%%% -*-BibTeX-*-
%%% Do NOT edit. File created by BibTeX with style
%%% ACM-Reference-Format-Journals [18-Jan-2012].

\begin{thebibliography}{51}

%%% ====================================================================
%%% NOTE TO THE USER: you can override these defaults by providing
%%% customized versions of any of these macros before the \bibliography
%%% command.  Each of them MUST provide its own final punctuation,
%%% except for \shownote{}, \showDOI{}, and \showURL{}.  The latter two
%%% do not use final punctuation, in order to avoid confusing it with
%%% the Web address.
%%%
%%% To suppress output of a particular field, define its macro to expand
%%% to an empty string, or better, \unskip, like this:
%%%
%%% \newcommand{\showDOI}[1]{\unskip}   % LaTeX syntax
%%%
%%% \def \showDOI #1{\unskip}           % plain TeX syntax
%%%
%%% ====================================================================

\ifx \showCODEN    \undefined \def \showCODEN     #1{\unskip}     \fi
\ifx \showDOI      \undefined \def \showDOI       #1{#1}\fi
\ifx \showISBNx    \undefined \def \showISBNx     #1{\unskip}     \fi
\ifx \showISBNxiii \undefined \def \showISBNxiii  #1{\unskip}     \fi
\ifx \showISSN     \undefined \def \showISSN      #1{\unskip}     \fi
\ifx \showLCCN     \undefined \def \showLCCN      #1{\unskip}     \fi
\ifx \shownote     \undefined \def \shownote      #1{#1}          \fi
\ifx \showarticletitle \undefined \def \showarticletitle #1{#1}   \fi
\ifx \showURL      \undefined \def \showURL       {\relax}        \fi
% The following commands are used for tagged output and should be
% invisible to TeX
\providecommand\bibfield[2]{#2}
\providecommand\bibinfo[2]{#2}
\providecommand\natexlab[1]{#1}
\providecommand\showeprint[2][]{arXiv:#2}

\bibitem[\protect\citeauthoryear{Adadi and Berrada}{Adadi and Berrada}{2018}]%
        {adadi2018xai}
\bibfield{author}{\bibinfo{person}{Amina Adadi} {and} \bibinfo{person}{Mohammed
  Berrada}.} \bibinfo{year}{2018}\natexlab{}.
\newblock \showarticletitle{Peeking Inside the Black-Box: A Survey on
  Explainable Artificial Intelligence (XAI)}.
\newblock \bibinfo{journal}{\emph{IEEE Access}}  \bibinfo{volume}{6}
  (\bibinfo{year}{2018}), \bibinfo{pages}{52138--52160}.
\newblock
\urldef\tempurl%
\url{https://doi.org/10.1109/ACCESS.2018.2870052}
\showDOI{\tempurl}


\bibitem[\protect\citeauthoryear{Agency}{Agency}{2021}]%
        {AI-EASA21}
\bibfield{author}{\bibinfo{person}{EU~Aviation~Safety Agency}.}
  \bibinfo{year}{2021}\natexlab{}.
\newblock \bibinfo{title}{EASA Concept Paper: First usable guidance for Level 1
  machine learning applications}.
\newblock
\newblock
\urldef\tempurl%
\url{https://www.easa.europa.eu/sites/default/files/dfu/easa_concept_paper_first_usable_guidance_for_level_1_machine_learning_applications_-_proposed_issue_01_1.pdf}
\showURL{%
\tempurl}


\bibitem[\protect\citeauthoryear{Apley and Zhu}{Apley and Zhu}{2019}]%
        {apley2019visualizing}
\bibfield{author}{\bibinfo{person}{Daniel~W. Apley} {and}
  \bibinfo{person}{Jingyu Zhu}.} \bibinfo{year}{2019}\natexlab{}.
\newblock \bibinfo{title}{Visualizing the Effects of Predictor Variables in
  Black Box Supervised Learning Models}.
\newblock
\newblock
\showeprint[arxiv]{1612.08468}~[stat.ME]


\bibitem[\protect\citeauthoryear{Authors}{Authors}{2021}]%
        {GAS}
\bibfield{author}{\bibinfo{person}{Anonymous Authors}.}
  \bibinfo{year}{2021}\natexlab{}.
\newblock \bibinfo{title}{Estimating Uncertainty of Autonomous Vehicle Systems
  with Generalized Polynomial Chaos}.
\newblock \bibinfo{howpublished}{In submission at TACAS~2022}.
\newblock


\bibitem[\protect\citeauthoryear{Baier, Dubslaff, Funke, Jantsch, Majumdar,
  Piribauer, and Ziemek}{Baier et~al\mbox{.}}{2021}]%
        {baier2021XAI}
\bibfield{author}{\bibinfo{person}{Christel Baier}, \bibinfo{person}{Clemens
  Dubslaff}, \bibinfo{person}{Florian Funke}, \bibinfo{person}{Simon Jantsch},
  \bibinfo{person}{Rupak Majumdar}, \bibinfo{person}{Jakob Piribauer}, {and}
  \bibinfo{person}{Robin Ziemek}.} \bibinfo{year}{2021}\natexlab{}.
\newblock \bibinfo{title}{From Verification to Causality-based Explications}.
\newblock
\newblock
\showeprint[arxiv]{2105.09533}~[cs.LO]


\bibitem[\protect\citeauthoryear{Bak, Liu, and Johnson}{Bak
  et~al\mbox{.}}{2021}]%
        {bak2021second}
\bibfield{author}{\bibinfo{person}{Stanley Bak}, \bibinfo{person}{Changliu
  Liu}, {and} \bibinfo{person}{Taylor Johnson}.}
  \bibinfo{year}{2021}\natexlab{}.
\newblock \bibinfo{title}{The Second International Verification of Neural
  Networks Competition (VNN-COMP 2021): Summary and Results}.
\newblock
\newblock
\showeprint[arxiv]{2109.00498}~[cs.LO]


\bibitem[\protect\citeauthoryear{Barnett, Leino, and Schulte}{Barnett
  et~al\mbox{.}}{2005}]%
        {specsharp}
\bibfield{author}{\bibinfo{person}{Mike Barnett}, \bibinfo{person}{K.~Rustan~M.
  Leino}, {and} \bibinfo{person}{Wolfram Schulte}.}
  \bibinfo{year}{2005}\natexlab{}.
\newblock \showarticletitle{The Spec{\#} Programming System: An Overview}. In
  \bibinfo{booktitle}{\emph{Construction and Analysis of Safe, Secure, and
  Interoperable Smart Devices}}, \bibfield{editor}{\bibinfo{person}{Gilles
  Barthe}, \bibinfo{person}{Lilian Burdy}, \bibinfo{person}{Marieke Huisman},
  \bibinfo{person}{Jean-Louis Lanet}, {and} \bibinfo{person}{Traian Muntean}}
  (Eds.). \bibinfo{publisher}{Springer Berlin Heidelberg},
  \bibinfo{address}{Berlin, Heidelberg}, \bibinfo{pages}{49--69}.
\newblock
\showISBNx{978-3-540-30569-9}


\bibitem[\protect\citeauthoryear{Bodria, Giannotti, Guidotti, Naretto,
  Pedreschi, and Rinzivillo}{Bodria et~al\mbox{.}}{2021}]%
        {bodria2021xai-benchmarking}
\bibfield{author}{\bibinfo{person}{Francesco Bodria}, \bibinfo{person}{Fosca
  Giannotti}, \bibinfo{person}{Riccardo Guidotti}, \bibinfo{person}{Francesca
  Naretto}, \bibinfo{person}{Dino Pedreschi}, {and} \bibinfo{person}{Salvatore
  Rinzivillo}.} \bibinfo{year}{2021}\natexlab{}.
\newblock \bibinfo{title}{Benchmarking and Survey of Explanation Methods for
  Black Box Models}.
\newblock
\newblock
\showeprint[arxiv]{2102.13076}~[cs.AI]


\bibitem[\protect\citeauthoryear{Brat, Navas, Shi, and Venet}{Brat
  et~al\mbox{.}}{2014}]%
        {IKOS}
\bibfield{author}{\bibinfo{person}{Guillaume Brat}, \bibinfo{person}{Jorge~A.
  Navas}, \bibinfo{person}{Nija Shi}, {and} \bibinfo{person}{Arnaud Venet}.}
  \bibinfo{year}{2014}\natexlab{}.
\newblock \showarticletitle{IKOS: A Framework for Static Analysis Based on
  Abstract Interpretation}. In \bibinfo{booktitle}{\emph{Software Engineering
  and Formal Methods}}, \bibfield{editor}{\bibinfo{person}{Dimitra
  Giannakopoulou} {and} \bibinfo{person}{Gwen Sala{\"u}n}} (Eds.).
  \bibinfo{publisher}{Springer International Publishing},
  \bibinfo{address}{Cham}, \bibinfo{pages}{271--277}.
\newblock
\showISBNx{978-3-319-10431-7}


\bibitem[\protect\citeauthoryear{Brockett and Liberzon}{Brockett and
  Liberzon}{2000}]%
        {brockett2000quantized}
\bibfield{author}{\bibinfo{person}{Roger~W Brockett} {and}
  \bibinfo{person}{Daniel Liberzon}.} \bibinfo{year}{2000}\natexlab{}.
\newblock \showarticletitle{Quantized feedback stabilization of linear
  systems}.
\newblock \bibinfo{journal}{\emph{IEEE transactions on Automatic Control}}
  \bibinfo{volume}{45}, \bibinfo{number}{7} (\bibinfo{year}{2000}),
  \bibinfo{pages}{1279--1289}.
\newblock


\bibitem[\protect\citeauthoryear{Clarke, Kroening, and Lerda}{Clarke
  et~al\mbox{.}}{2004}]%
        {CBMC}
\bibfield{author}{\bibinfo{person}{Edmund Clarke}, \bibinfo{person}{Daniel
  Kroening}, {and} \bibinfo{person}{Flavio Lerda}.}
  \bibinfo{year}{2004}\natexlab{}.
\newblock \showarticletitle{A Tool for Checking {ANSI-C} Programs}. In
  \bibinfo{booktitle}{\emph{Tools and Algorithms for the Construction and
  Analysis of Systems (TACAS 2004)}} \emph{(\bibinfo{series}{Lecture Notes in
  Computer Science}, Vol.~\bibinfo{volume}{2988})},
  \bibfield{editor}{\bibinfo{person}{Kurt Jensen} {and}
  \bibinfo{person}{Andreas Podelski}} (Eds.). \bibinfo{publisher}{Springer},
  \bibinfo{pages}{168--176}.
\newblock
\showISBNx{3-540-21299-X}


\bibitem[\protect\citeauthoryear{Dean, Matni, Recht, and Ye}{Dean
  et~al\mbox{.}}{2020}]%
        {dean2020a}
\bibfield{author}{\bibinfo{person}{Sarah Dean}, \bibinfo{person}{Nikolai
  Matni}, \bibinfo{person}{Benjamin Recht}, {and} \bibinfo{person}{Vickie Ye}.}
  \bibinfo{year}{2020}\natexlab{}.
\newblock \showarticletitle{Robust Guarantees for Perception-Based Control}. In
  \bibinfo{booktitle}{\emph{Proceedings of the 2nd Conference on Learning for
  Dynamics and Control}} \emph{(\bibinfo{series}{Proceedings of Machine
  Learning Research}, Vol.~\bibinfo{volume}{120})},
  \bibfield{editor}{\bibinfo{person}{Alexandre~M. Bayen}, \bibinfo{person}{Ali
  Jadbabaie}, \bibinfo{person}{George Pappas}, \bibinfo{person}{Pablo~A.
  Parrilo}, \bibinfo{person}{Benjamin Recht}, \bibinfo{person}{Claire Tomlin},
  {and} \bibinfo{person}{Melanie Zeilinger}} (Eds.). \bibinfo{publisher}{PMLR},
  \bibinfo{pages}{350--360}.
\newblock
\urldef\tempurl%
\url{https://proceedings.mlr.press/v120/dean20a.html}
\showURL{%
\tempurl}


\bibitem[\protect\citeauthoryear{Dreossi, Fremont, Ghosh, Kim, Ravanbakhsh,
  Vazquez-Chanlatte, and Seshia}{Dreossi et~al\mbox{.}}{2019}]%
        {verifAI}
\bibfield{author}{\bibinfo{person}{Tommaso Dreossi}, \bibinfo{person}{Daniel~J.
  Fremont}, \bibinfo{person}{Shromona Ghosh}, \bibinfo{person}{Edward Kim},
  \bibinfo{person}{Hadi Ravanbakhsh}, \bibinfo{person}{Marcell
  Vazquez-Chanlatte}, {and} \bibinfo{person}{Sanjit~A. Seshia}.}
  \bibinfo{year}{2019}\natexlab{}.
\newblock \showarticletitle{VerifAI: A Toolkit for the Formal Design and
  Analysis of Artificial Intelligence-Based Systems}. In
  \bibinfo{booktitle}{\emph{Computer Aided Verification}},
  \bibfield{editor}{\bibinfo{person}{Isil Dillig} {and} \bibinfo{person}{Serdar
  Tasiran}} (Eds.). \bibinfo{publisher}{Springer International Publishing},
  \bibinfo{address}{Cham}, \bibinfo{pages}{432--442}.
\newblock
\showISBNx{978-3-030-25540-4}


\bibitem[\protect\citeauthoryear{Du, Huang, Liu, Ji, Xu, Gao, Sibai,
  Driggs-Campbell, and Mitra}{Du et~al\mbox{.}}{2020}]%
        {du_safe_pedestrian_2020}
\bibfield{author}{\bibinfo{person}{Peter Du}, \bibinfo{person}{Zhe Huang},
  \bibinfo{person}{Tianqi Liu}, \bibinfo{person}{Tianchen Ji},
  \bibinfo{person}{Ke Xu}, \bibinfo{person}{Qichao Gao},
  \bibinfo{person}{Hussein Sibai}, \bibinfo{person}{Katherine Driggs-Campbell},
  {and} \bibinfo{person}{Sayan Mitra}.} \bibinfo{year}{2020}\natexlab{}.
\newblock \showarticletitle{Online Monitoring for Safe Pedestrian-Vehicle
  Interactions}. In \bibinfo{booktitle}{\emph{2020 IEEE 23rd International
  Conference on Intelligent Transportation Systems (ITSC)}}.
  \bibinfo{pages}{1--8}.
\newblock
\urldef\tempurl%
\url{https://doi.org/10.1109/ITSC45102.2020.9294366}
\showDOI{\tempurl}


\bibitem[\protect\citeauthoryear{Dutta, Chen, Jha, Sankaranarayanan, and
  Tiwari}{Dutta et~al\mbox{.}}{2019b}]%
        {sherlock}
\bibfield{author}{\bibinfo{person}{Souradeep Dutta}, \bibinfo{person}{Xin
  Chen}, \bibinfo{person}{Susmit Jha}, \bibinfo{person}{Sriram
  Sankaranarayanan}, {and} \bibinfo{person}{Ashish Tiwari}.}
  \bibinfo{year}{2019}\natexlab{b}.
\newblock \showarticletitle{Sherlock - A Tool for Verification of Neural
  Network Feedback Systems: Demo Abstract}. In
  \bibinfo{booktitle}{\emph{Proceedings of the 22nd ACM International
  Conference on Hybrid Systems: Computation and Control}} (Montreal, Quebec,
  Canada) \emph{(\bibinfo{series}{HSCC '19})}. \bibinfo{publisher}{Association
  for Computing Machinery}, \bibinfo{address}{New York, NY, USA},
  \bibinfo{pages}{262–263}.
\newblock
\showISBNx{9781450362825}
\urldef\tempurl%
\url{https://doi.org/10.1145/3302504.3313351}
\showDOI{\tempurl}


\bibitem[\protect\citeauthoryear{Dutta, Chen, and Sankaranarayanan}{Dutta
  et~al\mbox{.}}{2019a}]%
        {dutta2019reachability}
\bibfield{author}{\bibinfo{person}{Souradeep Dutta}, \bibinfo{person}{Xin
  Chen}, {and} \bibinfo{person}{Sriram Sankaranarayanan}.}
  \bibinfo{year}{2019}\natexlab{a}.
\newblock \showarticletitle{Reachability Analysis for Neural Feedback Systems
  Using Regressive Polynomial Rule Inference}. In
  \bibinfo{booktitle}{\emph{Proceedings of the 22nd ACM International
  Conference on Hybrid Systems: Computation and Control}} (Montreal, Quebec,
  Canada) \emph{(\bibinfo{series}{HSCC '19})}. \bibinfo{publisher}{Association
  for Computing Machinery}, \bibinfo{address}{New York, NY, USA},
  \bibinfo{pages}{157–168}.
\newblock
\showISBNx{9781450362825}
\urldef\tempurl%
\url{https://doi.org/10.1145/3302504.3311807}
\showDOI{\tempurl}


\bibitem[\protect\citeauthoryear{Everett, Habibi, Sun, and How}{Everett
  et~al\mbox{.}}{2021}]%
        {everett2021reachability}
\bibfield{author}{\bibinfo{person}{Michael Everett}, \bibinfo{person}{Golnaz
  Habibi}, \bibinfo{person}{Chuangchuang Sun}, {and}
  \bibinfo{person}{Jonathan~P. How}.} \bibinfo{year}{2021}\natexlab{}.
\newblock \bibinfo{title}{Reachability Analysis of Neural Feedback Loops}.
\newblock
\newblock
\showeprint[arxiv]{2108.04140}~[eess.SY]


\bibitem[\protect\citeauthoryear{F{\"a}hndrich}{F{\"a}hndrich}{2010}]%
        {codeContract}
\bibfield{author}{\bibinfo{person}{Manuel F{\"a}hndrich}.}
  \bibinfo{year}{2010}\natexlab{}.
\newblock \showarticletitle{Static Verification for Code Contracts}. In
  \bibinfo{booktitle}{\emph{Static Analysis}},
  \bibfield{editor}{\bibinfo{person}{Radhia Cousot} {and}
  \bibinfo{person}{Matthieu Martel}} (Eds.). \bibinfo{publisher}{Springer
  Berlin Heidelberg}, \bibinfo{address}{Berlin, Heidelberg},
  \bibinfo{pages}{2--5}.
\newblock
\showISBNx{978-3-642-15769-1}


\bibitem[\protect\citeauthoryear{Fan, Huang, Chen, Li, and Zhu}{Fan
  et~al\mbox{.}}{2020}]%
        {fan2020reachnn}
\bibfield{author}{\bibinfo{person}{Jiameng Fan}, \bibinfo{person}{Chao Huang},
  \bibinfo{person}{Xin Chen}, \bibinfo{person}{Wenchao Li}, {and}
  \bibinfo{person}{Qi Zhu}.} \bibinfo{year}{2020}\natexlab{}.
\newblock \showarticletitle{ReachNN*: A Tool for Reachability Analysis of
  Neural-Network Controlled Systems}. In \bibinfo{booktitle}{\emph{Automated
  Technology for Verification and Analysis}},
  \bibfield{editor}{\bibinfo{person}{Dang~Van Hung} {and} \bibinfo{person}{Oleg
  Sokolsky}} (Eds.). \bibinfo{publisher}{Springer International Publishing},
  \bibinfo{address}{Cham}, \bibinfo{pages}{537--542}.
\newblock
\showISBNx{978-3-030-59152-6}


\bibitem[\protect\citeauthoryear{Gros, Hermanns, Hoffmann, Klauck, and
  Steinmetz}{Gros et~al\mbox{.}}{2020}]%
        {DSMC}
\bibfield{author}{\bibinfo{person}{Timo~P. Gros}, \bibinfo{person}{Holger
  Hermanns}, \bibinfo{person}{J{\"o}rg Hoffmann}, \bibinfo{person}{Michaela
  Klauck}, {and} \bibinfo{person}{Marcel Steinmetz}.}
  \bibinfo{year}{2020}\natexlab{}.
\newblock \showarticletitle{Deep Statistical Model Checking}. In
  \bibinfo{booktitle}{\emph{Formal Techniques for Distributed Objects,
  Components, and Systems}}.
\newblock


\bibitem[\protect\citeauthoryear{Gurobi~Optimization}{Gurobi~Optimization}{2020}]%
        {gurobi}
\bibfield{author}{\bibinfo{person}{LLC Gurobi~Optimization}.}
  \bibinfo{year}{2020}\natexlab{}.
\newblock \bibinfo{title}{Gurobi Optimizer Reference Manual}.
\newblock
\newblock
\urldef\tempurl%
\url{http://www.gurobi.com}
\showURL{%
\tempurl}


\bibitem[\protect\citeauthoryear{Hoffmann, Tomlin, Montemerlo, and
  Thrun}{Hoffmann et~al\mbox{.}}{2007}]%
        {hoffmann_stanley_2007}
\bibfield{author}{\bibinfo{person}{Gabriel~M. Hoffmann},
  \bibinfo{person}{Claire~J. Tomlin}, \bibinfo{person}{Michael Montemerlo},
  {and} \bibinfo{person}{Sebastian Thrun}.} \bibinfo{year}{2007}\natexlab{}.
\newblock \showarticletitle{Autonomous Automobile Trajectory Tracking for
  Off-Road Driving: Controller Design, Experimental Validation and Racing}. In
  \bibinfo{booktitle}{\emph{2007 American Control Conference}}.
  \bibinfo{pages}{2296--2301}.
\newblock
\urldef\tempurl%
\url{https://doi.org/10.1109/ACC.2007.4282788}
\showDOI{\tempurl}


\bibitem[\protect\citeauthoryear{Hu, Fazlyab, Morari, and Pappas}{Hu
  et~al\mbox{.}}{2020}]%
        {hu2020reachsdcp}
\bibfield{author}{\bibinfo{person}{Haimin Hu}, \bibinfo{person}{Mahyar
  Fazlyab}, \bibinfo{person}{Manfred Morari}, {and} \bibinfo{person}{George~J.
  Pappas}.} \bibinfo{year}{2020}\natexlab{}.
\newblock \showarticletitle{Reach-SDP: Reachability Analysis of Closed-Loop
  Systems with Neural Network Controllers via Semidefinite Programming}. In
  \bibinfo{booktitle}{\emph{59th {IEEE} Conference on Decision and Control,
  {CDC} 2020, Jeju Island, South Korea, December 14-18, 2020}}.
  \bibinfo{publisher}{{IEEE}}, \bibinfo{pages}{5929--5934}.
\newblock
\urldef\tempurl%
\url{https://doi.org/10.1109/CDC42340.2020.9304296}
\showDOI{\tempurl}


\bibitem[\protect\citeauthoryear{Ivanov, Carpenter, Weimer, Alur, Pappas, and
  Lee}{Ivanov et~al\mbox{.}}{2020}]%
        {ivanov2020verify}
\bibfield{author}{\bibinfo{person}{Radoslav Ivanov}, \bibinfo{person}{Taylor~J.
  Carpenter}, \bibinfo{person}{James Weimer}, \bibinfo{person}{Rajeev Alur},
  \bibinfo{person}{George~J. Pappas}, {and} \bibinfo{person}{Insup Lee}.}
  \bibinfo{year}{2020}\natexlab{}.
\newblock \showarticletitle{Verifying the Safety of Autonomous Systems with
  Neural Network Controllers}.
\newblock \bibinfo{journal}{\emph{ACM Trans. Embed. Comput. Syst.}}
  \bibinfo{volume}{20}, \bibinfo{number}{1}, Article \bibinfo{articleno}{7}
  (\bibinfo{date}{Dec.} \bibinfo{year}{2020}), \bibinfo{numpages}{26}~pages.
\newblock
\showISSN{1539-9087}
\urldef\tempurl%
\url{https://doi.org/10.1145/3419742}
\showDOI{\tempurl}


\bibitem[\protect\citeauthoryear{Ivanov, Jothimurugan, Hsu, Vaidya, Alur, and
  Bastani}{Ivanov et~al\mbox{.}}{2021}]%
        {ivanov2021compositional}
\bibfield{author}{\bibinfo{person}{Radoslav Ivanov}, \bibinfo{person}{Kishor
  Jothimurugan}, \bibinfo{person}{Steve Hsu}, \bibinfo{person}{Shaan Vaidya},
  \bibinfo{person}{Rajeev Alur}, {and} \bibinfo{person}{Osbert Bastani}.}
  \bibinfo{year}{2021}\natexlab{}.
\newblock \showarticletitle{Compositional Learning and Verification of Neural
  Network Controllers}.
\newblock \bibinfo{journal}{\emph{ACM Trans. Embed. Comput. Syst.}}
  \bibinfo{volume}{20}, \bibinfo{number}{5s}, Article \bibinfo{articleno}{92}
  (\bibinfo{date}{Sept.} \bibinfo{year}{2021}), \bibinfo{numpages}{26}~pages.
\newblock
\showISSN{1539-9087}
\urldef\tempurl%
\url{https://doi.org/10.1145/3477023}
\showDOI{\tempurl}


\bibitem[\protect\citeauthoryear{Ivanov, Weimer, Alur, Pappas, and Lee}{Ivanov
  et~al\mbox{.}}{2019}]%
        {verisig}
\bibfield{author}{\bibinfo{person}{Radoslav Ivanov}, \bibinfo{person}{James
  Weimer}, \bibinfo{person}{Rajeev Alur}, \bibinfo{person}{George~J. Pappas},
  {and} \bibinfo{person}{Insup Lee}.} \bibinfo{year}{2019}\natexlab{}.
\newblock \showarticletitle{Verisig: Verifying Safety Properties of Hybrid
  Systems with Neural Network Controllers}. In
  \bibinfo{booktitle}{\emph{Proceedings of the 22nd ACM International
  Conference on Hybrid Systems: Computation and Control}} (Montreal, Quebec,
  Canada) \emph{(\bibinfo{series}{HSCC '19})}. \bibinfo{publisher}{Association
  for Computing Machinery}, \bibinfo{address}{New York, NY, USA},
  \bibinfo{pages}{169–178}.
\newblock
\showISBNx{9781450362825}
\urldef\tempurl%
\url{https://doi.org/10.1145/3302504.3311806}
\showDOI{\tempurl}


\bibitem[\protect\citeauthoryear{Janzing, Minorics, and Bl{\"{o}}baum}{Janzing
  et~al\mbox{.}}{2020}]%
        {janzing20a}
\bibfield{author}{\bibinfo{person}{Dominik Janzing}, \bibinfo{person}{Lenon
  Minorics}, {and} \bibinfo{person}{Patrick Bl{\"{o}}baum}.}
  \bibinfo{year}{2020}\natexlab{}.
\newblock \showarticletitle{Feature relevance quantification in explainable
  {AI:} {A} causal problem}. In \bibinfo{booktitle}{\emph{The 23rd
  International Conference on Artificial Intelligence and Statistics, {AISTATS}
  2020, 26-28 August 2020, Online [Palermo, Sicily, Italy]}}
  \emph{(\bibinfo{series}{Proceedings of Machine Learning Research},
  Vol.~\bibinfo{volume}{108})}, \bibfield{editor}{\bibinfo{person}{Silvia
  Chiappa} {and} \bibinfo{person}{Roberto Calandra}} (Eds.).
  \bibinfo{publisher}{{PMLR}}, \bibinfo{pages}{2907--2916}.
\newblock
\urldef\tempurl%
\url{http://proceedings.mlr.press/v108/janzing20a.html}
\showURL{%
\tempurl}


\bibitem[\protect\citeauthoryear{Julian and Kochenderfer}{Julian and
  Kochenderfer}{2019}]%
        {Julian2019GuaranteeingSF}
\bibfield{author}{\bibinfo{person}{Kyle~D. Julian} {and}
  \bibinfo{person}{Mykel~J. Kochenderfer}.} \bibinfo{year}{2019}\natexlab{}.
\newblock \showarticletitle{Guaranteeing Safety for Neural Network-Based
  Aircraft Collision Avoidance Systems}.
\newblock \bibinfo{journal}{\emph{2019 IEEE/AIAA 38th Digital Avionics Systems
  Conference (DASC)}} (\bibinfo{year}{2019}), \bibinfo{pages}{1--10}.
\newblock


\bibitem[\protect\citeauthoryear{Julian and Kochenderfer}{Julian and
  Kochenderfer}{2021}]%
        {julian2021acasxu}
\bibfield{author}{\bibinfo{person}{Kyle~D. Julian} {and}
  \bibinfo{person}{Mykel~J. Kochenderfer}.} \bibinfo{year}{2021}\natexlab{}.
\newblock \showarticletitle{Reachability Analysis for Neural Network Aircraft
  Collision Avoidance Systems}.
\newblock \bibinfo{journal}{\emph{Journal of Guidance, Control, and Dynamics}}
  \bibinfo{volume}{44}, \bibinfo{number}{6} (\bibinfo{year}{2021}),
  \bibinfo{pages}{1132--1142}.
\newblock
\urldef\tempurl%
\url{https://doi.org/10.2514/1.G005233}
\showDOI{\tempurl}
\showeprint{https://doi.org/10.2514/1.G005233}


\bibitem[\protect\citeauthoryear{Katz, Barrett, Dill, Julian, and
  Kochenderfer}{Katz et~al\mbox{.}}{2017}]%
        {reluplex}
\bibfield{author}{\bibinfo{person}{Guy Katz}, \bibinfo{person}{Clark Barrett},
  \bibinfo{person}{David~L. Dill}, \bibinfo{person}{Kyle Julian}, {and}
  \bibinfo{person}{Mykel~J. Kochenderfer}.} \bibinfo{year}{2017}\natexlab{}.
\newblock \showarticletitle{Reluplex: An Efficient SMT Solver for Verifying
  Deep Neural Networks}. In \bibinfo{booktitle}{\emph{Computer Aided
  Verification}}, \bibfield{editor}{\bibinfo{person}{Rupak Majumdar} {and}
  \bibinfo{person}{Viktor Kun{\v{c}}ak}} (Eds.). \bibinfo{publisher}{Springer
  International Publishing}, \bibinfo{address}{Cham}, \bibinfo{pages}{97--117}.
\newblock
\showISBNx{978-3-319-63387-9}


\bibitem[\protect\citeauthoryear{Katz, Corso, Strong, and Kochenderfer}{Katz
  et~al\mbox{.}}{2021}]%
        {katz2021verification}
\bibfield{author}{\bibinfo{person}{Sydney~M. Katz}, \bibinfo{person}{Anthony~L.
  Corso}, \bibinfo{person}{Christopher~A. Strong}, {and}
  \bibinfo{person}{Mykel~J. Kochenderfer}.} \bibinfo{year}{2021}\natexlab{}.
\newblock \bibinfo{title}{Verification of Image-based Neural Network
  Controllers Using Generative Models}.
\newblock
\newblock
\showeprint[arxiv]{2105.07091}~[cs.LG]


\bibitem[\protect\citeauthoryear{Koopman, Ferrell, Fratrik, and Wagner}{Koopman
  et~al\mbox{.}}{2019}]%
        {koopman2019safety}
\bibfield{author}{\bibinfo{person}{Philip Koopman}, \bibinfo{person}{Uma
  Ferrell}, \bibinfo{person}{Frank Fratrik}, {and} \bibinfo{person}{Michael
  Wagner}.} \bibinfo{year}{2019}\natexlab{}.
\newblock \showarticletitle{A safety standard approach for fully autonomous
  vehicles}. In \bibinfo{booktitle}{\emph{International Conference on Computer
  Safety, Reliability, and Security}}. Springer, \bibinfo{pages}{326--332}.
\newblock


\bibitem[\protect\citeauthoryear{Koopman and Fratrik}{Koopman and
  Fratrik}{2019}]%
        {koopman2019many}
\bibfield{author}{\bibinfo{person}{Philip Koopman} {and} \bibinfo{person}{Frank
  Fratrik}.} \bibinfo{year}{2019}\natexlab{}.
\newblock \showarticletitle{How many operational design domains, objects, and
  events?}. In \bibinfo{booktitle}{\emph{SafeAI@ AAAI}}.
\newblock


\bibitem[\protect\citeauthoryear{Laugel, Lesot, Marsala, Renard, and
  Detyniecki}{Laugel et~al\mbox{.}}{2019}]%
        {LaugelLMRD19}
\bibfield{author}{\bibinfo{person}{Thibault Laugel},
  \bibinfo{person}{Marie{-}Jeanne Lesot}, \bibinfo{person}{Christophe Marsala},
  \bibinfo{person}{Xavier Renard}, {and} \bibinfo{person}{Marcin Detyniecki}.}
  \bibinfo{year}{2019}\natexlab{}.
\newblock \showarticletitle{The Dangers of Post-hoc Interpretability:
  Unjustified Counterfactual Explanations}. In
  \bibinfo{booktitle}{\emph{Proceedings of the Twenty-Eighth International
  Joint Conference on Artificial Intelligence, {IJCAI} 2019, Macao, China,
  August 10-16, 2019}}, \bibfield{editor}{\bibinfo{person}{Sarit Kraus}} (Ed.).
  \bibinfo{publisher}{ijcai.org}, \bibinfo{pages}{2801--2807}.
\newblock
\urldef\tempurl%
\url{https://doi.org/10.24963/ijcai.2019/388}
\showDOI{\tempurl}


\bibitem[\protect\citeauthoryear{Leavens, Baker, and Ruby}{Leavens
  et~al\mbox{.}}{2006}]%
        {javaml}
\bibfield{author}{\bibinfo{person}{Gary~T. Leavens}, \bibinfo{person}{Albert~L.
  Baker}, {and} \bibinfo{person}{Clyde Ruby}.} \bibinfo{year}{2006}\natexlab{}.
\newblock \showarticletitle{Preliminary Design of JML: A Behavioral Interface
  Specification Language for Java}.
\newblock \bibinfo{journal}{\emph{SIGSOFT Softw. Eng. Notes}}
  \bibinfo{volume}{31}, \bibinfo{number}{3} (\bibinfo{date}{May}
  \bibinfo{year}{2006}), \bibinfo{pages}{1–38}.
\newblock
\showISSN{0163-5948}
\urldef\tempurl%
\url{https://doi.org/10.1145/1127878.1127884}
\showDOI{\tempurl}


\bibitem[\protect\citeauthoryear{Lu, Sibai, Fabry, and Forsyth}{Lu
  et~al\mbox{.}}{2017}]%
        {lu2017stopsign}
\bibfield{author}{\bibinfo{person}{Jiajun Lu}, \bibinfo{person}{Hussein Sibai},
  \bibinfo{person}{Evan Fabry}, {and} \bibinfo{person}{David Forsyth}.}
  \bibinfo{year}{2017}\natexlab{}.
\newblock \bibinfo{title}{NO Need to Worry about Adversarial Examples in Object
  Detection in Autonomous Vehicles}.
\newblock
\newblock
\showeprint[arxiv]{1707.03501}~[cs.CV]


\bibitem[\protect\citeauthoryear{Mitra}{Mitra}{[n.d.]}]%
        {MitraCPSBook2021}
\bibfield{author}{\bibinfo{person}{Sayan Mitra}.}
  \bibinfo{year}{[n.d.]}\natexlab{}.
\newblock \bibinfo{booktitle}{\emph{Verifying Cyber-Physical Systems: A Path to
  Safe Autonomy}}.
\newblock \bibinfo{publisher}{MIT Press}.
\newblock
\showISBNx{978-0-262-04480-6}
\urldef\tempurl%
\url{https://mitpress.mit.edu/contributors/sayan-mitra}
\showURL{%
\tempurl}


\bibitem[\protect\citeauthoryear{Molnar, Casalicchio, and Bischl}{Molnar
  et~al\mbox{.}}{2020}]%
        {IML-brief-history21}
\bibfield{author}{\bibinfo{person}{Christoph Molnar}, \bibinfo{person}{Giuseppe
  Casalicchio}, {and} \bibinfo{person}{Bernd Bischl}.}
  \bibinfo{year}{2020}\natexlab{}.
\newblock \showarticletitle{Interpretable Machine Learning -- A Brief History,
  State-of-the-Art and Challenges}. In \bibinfo{booktitle}{\emph{ECML PKDD 2020
  Workshops}}, \bibfield{editor}{\bibinfo{person}{Irena Koprinska},
  \bibinfo{person}{Michael Kamp}, \bibinfo{person}{Annalisa Appice},
  \bibinfo{person}{Corrado Loglisci}, \bibinfo{person}{Luiza Antonie},
  \bibinfo{person}{Albrecht Zimmermann}, \bibinfo{person}{Riccardo Guidotti},
  \bibinfo{person}{{\"O}zlem {\"O}zg{\"o}bek}, \bibinfo{person}{Rita~P.
  Ribeiro}, \bibinfo{person}{Ricard Gavald{\`a}}, \bibinfo{person}{Jo{\~a}o
  Gama}, \bibinfo{person}{Linara Adilova}, \bibinfo{person}{Yamuna
  Krishnamurthy}, \bibinfo{person}{Pedro~M. Ferreira}, \bibinfo{person}{Donato
  Malerba}, \bibinfo{person}{Ib{\'e}ria Medeiros},
  \bibinfo{person}{Michelangelo Ceci}, \bibinfo{person}{Giuseppe Manco},
  \bibinfo{person}{Elio Masciari}, \bibinfo{person}{Zbigniew~W. Ras},
  \bibinfo{person}{Peter Christen}, \bibinfo{person}{Eirini Ntoutsi},
  \bibinfo{person}{Erich Schubert}, \bibinfo{person}{Arthur Zimek},
  \bibinfo{person}{Anna Monreale}, \bibinfo{person}{Przemyslaw Biecek},
  \bibinfo{person}{Salvatore Rinzivillo}, \bibinfo{person}{Benjamin Kille},
  \bibinfo{person}{Andreas Lommatzsch}, {and} \bibinfo{person}{Jon~Atle Gulla}}
  (Eds.). \bibinfo{publisher}{Springer International Publishing},
  \bibinfo{address}{Cham}, \bibinfo{pages}{417--431}.
\newblock
\showISBNx{978-3-030-65965-3}


\bibitem[\protect\citeauthoryear{Muvva, Bradley, Wolf, and Johnson}{Muvva
  et~al\mbox{.}}{[n.d.]}]%
        {muvva2021uavlec}
\bibfield{author}{\bibinfo{person}{Krishna Muvva}, \bibinfo{person}{Justin~M.
  Bradley}, \bibinfo{person}{Marilyn Wolf}, {and} \bibinfo{person}{Taylor
  Johnson}.} \bibinfo{year}{[n.d.]}\natexlab{}.
\newblock \bibinfo{booktitle}{\emph{Assuring Learning-Enabled Components in
  Small Unmanned Aircraft Systems}}.
\newblock
\urldef\tempurl%
\url{https://doi.org/10.2514/6.2021-0994}
\showDOI{\tempurl}
\showeprint{https://arc.aiaa.org/doi/pdf/10.2514/6.2021-0994}


\bibitem[\protect\citeauthoryear{Neven, Brabandere, Georgoulis, Proesmans, and
  Gool}{Neven et~al\mbox{.}}{2018}]%
        {nevan_lanenet_2018}
\bibfield{author}{\bibinfo{person}{Davy Neven}, \bibinfo{person}{Bert~De
  Brabandere}, \bibinfo{person}{Stamatios Georgoulis}, \bibinfo{person}{Marc
  Proesmans}, {and} \bibinfo{person}{Luc~Van Gool}.}
  \bibinfo{year}{2018}\natexlab{}.
\newblock \showarticletitle{Towards End-to-End Lane Detection: an Instance
  Segmentation Approach}. In \bibinfo{booktitle}{\emph{2018 IEEE Intelligent
  Vehicles Symposium (IV)}}. \bibinfo{pages}{286--291}.
\newblock
\urldef\tempurl%
\url{https://doi.org/10.1109/IVS.2018.8500547}
\showDOI{\tempurl}


\bibitem[\protect\citeauthoryear{Paden, Čáp, Yong, Yershov, and
  Frazzoli}{Paden et~al\mbox{.}}{2016}]%
        {frazzoli_survey_2016}
\bibfield{author}{\bibinfo{person}{Brian Paden}, \bibinfo{person}{Michal
  Čáp}, \bibinfo{person}{Sze~Zheng Yong}, \bibinfo{person}{Dmitry Yershov},
  {and} \bibinfo{person}{Emilio Frazzoli}.} \bibinfo{year}{2016}\natexlab{}.
\newblock \showarticletitle{A Survey of Motion Planning and Control Techniques
  for Self-Driving Urban Vehicles}.
\newblock \bibinfo{journal}{\emph{IEEE Transactions on Intelligent Vehicles}}
  \bibinfo{volume}{1}, \bibinfo{number}{1} (\bibinfo{year}{2016}),
  \bibinfo{pages}{33--55}.
\newblock
\urldef\tempurl%
\url{https://doi.org/10.1109/TIV.2016.2578706}
\showDOI{\tempurl}


\bibitem[\protect\citeauthoryear{Pasareanu, Gopinath, and Yu}{Pasareanu
  et~al\mbox{.}}{2018}]%
        {pasareanu2018compositional}
\bibfield{author}{\bibinfo{person}{Corina~S. Pasareanu}, \bibinfo{person}{Divya
  Gopinath}, {and} \bibinfo{person}{Huafeng Yu}.}
  \bibinfo{year}{2018}\natexlab{}.
\newblock \bibinfo{title}{Compositional Verification for Autonomous Systems
  with Deep Learning Components}.
\newblock
\newblock


\bibitem[\protect\citeauthoryear{Platzer}{Platzer}{2010}]%
        {platzer2010difflogic}
\bibfield{author}{\bibinfo{person}{Andr Platzer}.}
  \bibinfo{year}{2010}\natexlab{}.
\newblock \bibinfo{booktitle}{\emph{Logical Analysis of Hybrid Systems: Proving
  Theorems for Complex Dynamics} (\bibinfo{edition}{1st} ed.)}.
\newblock \bibinfo{publisher}{Springer Publishing Company, Incorporated}.
\newblock
\showISBNx{3642145086}


\bibitem[\protect\citeauthoryear{Platzer and Clarke}{Platzer and
  Clarke}{2008}]%
        {platzer2008logic}
\bibfield{author}{\bibinfo{person}{Andr\'{e} Platzer} {and}
  \bibinfo{person}{Edmund~M. Clarke}.} \bibinfo{year}{2008}\natexlab{}.
\newblock \showarticletitle{Computing Differential Invariants of Hybrid Systems
  as Fixedpoints}. In \bibinfo{booktitle}{\emph{Proceedings of the 20th
  International Conference on Computer Aided Verification}} (Princeton, NJ,
  USA) \emph{(\bibinfo{series}{CAV '08})}.
  \bibinfo{publisher}{Springer-Verlag}, \bibinfo{address}{Berlin, Heidelberg},
  \bibinfo{pages}{176–189}.
\newblock
\showISBNx{9783540705437}
\urldef\tempurl%
\url{https://doi.org/10.1007/978-3-540-70545-1_17}
\showDOI{\tempurl}


\bibitem[\protect\citeauthoryear{Prajna and Jadbabaie}{Prajna and
  Jadbabaie}{2004}]%
        {prajana2004barrier}
\bibfield{author}{\bibinfo{person}{Stephen Prajna} {and} \bibinfo{person}{Ali
  Jadbabaie}.} \bibinfo{year}{2004}\natexlab{}.
\newblock \showarticletitle{Safety Verification of Hybrid Systems Using Barrier
  Certificates}. In \bibinfo{booktitle}{\emph{Hybrid Systems: Computation and
  Control}}, \bibfield{editor}{\bibinfo{person}{Rajeev Alur} {and}
  \bibinfo{person}{George~J. Pappas}} (Eds.). \bibinfo{publisher}{Springer
  Berlin Heidelberg}, \bibinfo{address}{Berlin, Heidelberg},
  \bibinfo{pages}{477--492}.
\newblock
\showISBNx{978-3-540-24743-2}


\bibitem[\protect\citeauthoryear{Pulina and Tacchella}{Pulina and
  Tacchella}{2011}]%
        {NeVer}
\bibfield{author}{\bibinfo{person}{Luca Pulina} {and} \bibinfo{person}{Armando
  Tacchella}.} \bibinfo{year}{2011}\natexlab{}.
\newblock \showarticletitle{NeVer: A Tool for Artificial Neural Networks
  Verification}.
\newblock \bibinfo{journal}{\emph{Annals of Mathematics and Artificial
  Intelligence}} \bibinfo{volume}{62}, \bibinfo{number}{3–4}
  (\bibinfo{date}{July} \bibinfo{year}{2011}), \bibinfo{pages}{403–425}.
\newblock
\showISSN{1012-2443}
\urldef\tempurl%
\url{https://doi.org/10.1007/s10472-011-9243-0}
\showDOI{\tempurl}


\bibitem[\protect\citeauthoryear{Sankaranarayanan, Sipma, and
  Manna}{Sankaranarayanan et~al\mbox{.}}{2004}]%
        {sankaranarayanan2004invariant}
\bibfield{author}{\bibinfo{person}{Sriram Sankaranarayanan},
  \bibinfo{person}{Henny~B. Sipma}, {and} \bibinfo{person}{Zohar Manna}.}
  \bibinfo{year}{2004}\natexlab{}.
\newblock \showarticletitle{Constructing Invariants for Hybrid Systems}. In
  \bibinfo{booktitle}{\emph{Hybrid Systems: Computation and Control}},
  \bibfield{editor}{\bibinfo{person}{Rajeev Alur} {and}
  \bibinfo{person}{George~J. Pappas}} (Eds.). \bibinfo{publisher}{Springer
  Berlin Heidelberg}, \bibinfo{address}{Berlin, Heidelberg},
  \bibinfo{pages}{539--554}.
\newblock
\showISBNx{978-3-540-24743-2}


\bibitem[\protect\citeauthoryear{Seshia, Desai, Dreossi, Fremont, Ghosh, Kim,
  Shivakumar, Vazquez-Chanlatte, and Yue}{Seshia et~al\mbox{.}}{2018}]%
        {seshia2018formal}
\bibfield{author}{\bibinfo{person}{Sanjit~A Seshia}, \bibinfo{person}{Ankush
  Desai}, \bibinfo{person}{Tommaso Dreossi}, \bibinfo{person}{Daniel~J
  Fremont}, \bibinfo{person}{Shromona Ghosh}, \bibinfo{person}{Edward Kim},
  \bibinfo{person}{Sumukh Shivakumar}, \bibinfo{person}{Marcell
  Vazquez-Chanlatte}, {and} \bibinfo{person}{Xiangyu Yue}.}
  \bibinfo{year}{2018}\natexlab{}.
\newblock \showarticletitle{Formal specification for deep neural networks}. In
  \bibinfo{booktitle}{\emph{International Symposium on Automated Technology for
  Verification and Analysis}}. Springer, \bibinfo{pages}{20--34}.
\newblock


\bibitem[\protect\citeauthoryear{Sivakumar, Modi, Gasparino, Ellis, {Baquero
  Velasquez}, Chowdhary, and Gupta}{Sivakumar et~al\mbox{.}}{2021}]%
        {sivakumar_visual_agbot_2021}
\bibfield{author}{\bibinfo{person}{Arun~Narenthiran Sivakumar},
  \bibinfo{person}{Sahil Modi}, \bibinfo{person}{Mateus~Valverde Gasparino},
  \bibinfo{person}{Che Ellis}, \bibinfo{person}{Andres~Eduardo {Baquero
  Velasquez}}, \bibinfo{person}{Girish Chowdhary}, {and}
  \bibinfo{person}{Saurabh Gupta}.} \bibinfo{year}{2021}\natexlab{}.
\newblock \showarticletitle{{Learned Visual Navigation for Under-Canopy
  Agricultural Robots}}. In \bibinfo{booktitle}{\emph{Proceedings of Robotics:
  Science and Systems}}. \bibinfo{address}{Virtual}.
\newblock
\urldef\tempurl%
\url{https://doi.org/10.15607/RSS.2021.XVII.019}
\showDOI{\tempurl}


\bibitem[\protect\citeauthoryear{Szegedy, Zaremba, Sutskever, Bruna, Erhan,
  Goodfellow, and Fergus}{Szegedy et~al\mbox{.}}{2014}]%
        {szegedy2014intriguing}
\bibfield{author}{\bibinfo{person}{Christian Szegedy},
  \bibinfo{person}{Wojciech Zaremba}, \bibinfo{person}{Ilya Sutskever},
  \bibinfo{person}{Joan Bruna}, \bibinfo{person}{Dumitru Erhan},
  \bibinfo{person}{Ian Goodfellow}, {and} \bibinfo{person}{Rob Fergus}.}
  \bibinfo{year}{2014}\natexlab{}.
\newblock \bibinfo{title}{Intriguing properties of neural networks}.
\newblock
\newblock
\showeprint[arxiv]{1312.6199}~[cs.CV]


\bibitem[\protect\citeauthoryear{Tran, Yang, Manzanas~Lopez, Musau, Nguyen,
  Xiang, Bak, and Johnson}{Tran et~al\mbox{.}}{2020}]%
        {NNVTool}
\bibfield{author}{\bibinfo{person}{Hoang-Dung Tran}, \bibinfo{person}{Xiaodong
  Yang}, \bibinfo{person}{Diego Manzanas~Lopez}, \bibinfo{person}{Patrick
  Musau}, \bibinfo{person}{Luan~Viet Nguyen}, \bibinfo{person}{Weiming Xiang},
  \bibinfo{person}{Stanley Bak}, {and} \bibinfo{person}{Taylor~T. Johnson}.}
  \bibinfo{year}{2020}\natexlab{}.
\newblock \showarticletitle{NNV: The Neural Network Verification Tool for Deep
  Neural Networks and Learning-Enabled Cyber-Physical Systems}. In
  \bibinfo{booktitle}{\emph{Computer Aided Verification}},
  \bibfield{editor}{\bibinfo{person}{Shuvendu~K. Lahiri} {and}
  \bibinfo{person}{Chao Wang}} (Eds.). \bibinfo{publisher}{Springer
  International Publishing}, \bibinfo{address}{Cham}, \bibinfo{pages}{3--17}.
\newblock
\showISBNx{978-3-030-53288-8}


\end{thebibliography}

\appendix

\section{Stanley Controller with GEM car}\label{appx:stanley}

\begin{table}[H]
    \caption{Description and values of constants for Polaris GEM e2 Electric Cart model}
    \begin{tabular}{l|r|l}
        Symbol         & Value & Description                     \\ \hline
        $\vf$          &   2.8 & Constant forward velocity (m/s) \\
        $L$            &  1.75 & Wheel base (m)                  \\
        $\dT$          &   0.1 & Time discretization (s)         \\
        $\delta_{max}$ &  0.61 & Steering angle limit (rad)      \\
        $K$            &  0.45 & Stanley controller gain
    \end{tabular}
\end{table}

\paragraph{Non-increasing cross track distance}

Following the proof in~\cite{hoffmann_stanley_2007},
when $|\psi + \arctan(\frac{K\cdot d}{\vf})| < \delta_{max}$,
\begin{align*}
   \dot{d} &= -\vf\cdot\sin(\arctan(\frac{K\cdot d}{\vf})) \\
           &= -\frac{K\cdot d}{\sqrt{1 + (\frac{K\cdot d}{\vf})^2}}
\end{align*}

Note that $\lVert d\rVert$ converges to zero because $-\frac{K\cdot d}{\sqrt{1 + (\frac{K\cdot d}{\vf})^2}}$
is always the opposite sign of $d$.
We can find the Lyapunov function for nominal region $\lyapunov_2(d, \psi) = \lVert d \rVert$.
This is however not entirely true in discrete dynamics because the value $\lVert d \rVert$ can cross zero and become larger in a discrete transition.

\paragraph{Non-increasing vector norm value}
Following the proof in~\cite{hoffmann_stanley_2007},
when $|\psi + \arctan(\frac{K\cdot d}{\vf})| < \delta_{max}$,
\begin{align*}
\dot{d}    &= -\frac{K\cdot d}{\sqrt{1 + (\frac{K\cdot d}{\vf})^2}} \\
\dot{\psi} &= -\frac{\vf\cdot\sin(\psi + \arctan(\frac{K\cdot d}{\vf}))}{L}
\end{align*}

The sign of $\dot{\psi}$ is opposite that of $(\psi + \arctan(\frac{K\cdot d}{\vf}))$,
so $\psi$ approaches $\arctan(\frac{K\cdot d}{\vf})$.
Further, $\arctan(\frac{K\cdot d}{\vf})$ converges to zero because $d$ converges to zero as proven above.
Therefore, the origin is the only equilibrium, and the 2D vector norm is non-increasing.

\section{Modified Stanley Controller with farm robots}\label{appx:stanley-agbot}

\begin{table}[H]
    \caption{Description and values of constant Symbols for agricultural robots}
    \begin{tabular}{l|r|l}
        Symbol         & Value & Description                     \\ \hline
        $\vf$          &   1.0 & Constant forward velocity (m/s) \\
        $\dT$          &  0.05 & Time discretization (s)         \\
        $\omega_{max}$ &   0.5 & Angular velocity limit (rad/s)  \\
        $K$            &   0.1 & Stanley controller gain
    \end{tabular}
\end{table}

\end{document}